%% file: writeup.tex
\documentclass[11pt]{article}

\usepackage{style_defaults}
\usepackage{amssymb}
\usepackage{macros}
\usepackage{geometry}
\usepackage{fullpage}
\usepackage{authblk}
\usepackage{booktabs}
\usepackage{makecell}
\usepackage{ragged2e}
\usepackage[round]{natbib}
\usepackage{hyperref}
\hypersetup{hidelinks}
\usepackage{xspace}
\usepackage{subcaption}
\usepackage{pgfplots}
\usepackage{tikz}
\usepgfplotslibrary{groupplots}
\usetikzlibrary{patterns,calc}
\pgfplotsset{compat=1.18}
\usetikzlibrary{arrows.meta, calc, decorations.pathreplacing}

\definecolor{llgray}{RGB}{245,245,245}
\definecolor{llblue}{RGB}{222,235,247}
\definecolor{llorange}{RGB}{252,232,206}
\definecolor{llgreen}{RGB}{226,242,218}

\graphicspath{{./}{figures/}}

\title{Smooth Partial Lotteries for Stable Randomized Selection}

\author{Alexander Goldberg}
\author{Giulia Fanti}
\author{Nihar B. Shah}
\affil{Carnegie Mellon University}
\affil{\{\texttt{akgoldbe,gfanti,nihars}\}\texttt{@andrew.cmu.edu}}
\date{}

\begin{document}

\maketitle

\begin{abstract}
Competitive selection processes, from scientific funding to admissions and hiring, use evaluations to score candidates, and eventually choose a subset of them based on those scores. Recently, many organizations have adopted \emph{partial lotteries}, which randomize selection based on evaluation scores. However, existing lottery designs are inherently unstable, as a small change to a single candidate's score can cause large shifts in their selection probabilities. This instability undermines a key goal of lotteries: reducing the influence of fine-grained score distinctions near the decision boundary. We propose \emph{smoothness} as a design principle for partial lotteries, formalizing it as a Lipschitz condition on the mapping from review scores over candidates to selection probabilities. We introduce the \emph{\ourlottery}, a simple mechanism in which selection probabilities scale linearly with estimated quality between an upper threshold, above which we always accept, and a lower threshold, below which we always reject. We prove that the \ourlottery's worst-case regret matches a lower bound for any smooth selection rule up to a factor of $(1 - \nselected/\napps)$, where $\nselected/\napps$ is the acceptance rate. We compare smooth selection to other stability notions like Individual Fairness and Differential Privacy, showing that the \ourlottery achieves a better smoothness--regret tradeoff than alternatives. Experiments on real peer review data from ICLR 2025, NeurIPS 2024, and the Swiss National Science Foundation demonstrate that existing lottery designs are highly unstable in practice even under perturbations to a single score. Our experiments also confirm the tightness of our theoretical analysis and show that our proposed \ourlottery achieves a better smoothness--utility tradeoff than alternatives in practice.
\end{abstract}

\section{Introduction}
\label{sec:intro}

\begin{figure}[t]
    \centering
    \resizebox{\linewidth}{!}{\input{figures/diagram1.tex}}
    \caption{\textbf{\ourlottery{} responds smoothly to score perturbations.}
    Marginal selection probabilities before and after a one-point increase in
    proposal~5's review score, with $2$ awards for $8$ candidates. Under the
    thresholded partial lottery, proposal~8 is selected deterministically, while the remaining award is allocated with equal probability among proposals~6 and~7; the score increase lifts proposal~5, splitting the award three ways at~$1/3$ each. A one-point change thus causes a discontinuous jump  (proposal~5 from $0$ to $1/3$) and abrupt drops for proposals~6 and~7. Under our \ourlottery{}, the same perturbation induces only small, smoothly varying changes in every proposal's selection probability.}    \label{fig:smooth_illustration}
\end{figure}

In settings like scientific funding, peer review, job screening, and university admissions, decision makers evaluate candidates and then select a subset for acceptance. Typically, multiple reviewers evaluate each candidate, these reviews are aggregated into a rank-ordered list, and then candidates are selected from the top down until the selection budget is exhausted. Recently, there has been growing interest in introducing \emph{randomization} into such selection processes. Most prominently, many scientific funding agencies have adopted ``partial lotteries'' to allocate research grants, including the New Zealand Health Research Council~\citep{liu2020acceptability}, the Swiss National Science Foundation~\citep{adam2019science}, the European Research Council~\citep{erc2023peerreview}, and many other funders~\citep{innovate2024, vwf2025lottery,britishacademy2025randomisation, fwf_1000_ideas, formas_explore_2025, luebber2025lottery}. 

In a partial lottery, a decision maker elicits quality evaluations and then selects candidates in a randomized fashion according to a probability distribution determined by those evaluations. A key practical motivation for randomization is to reduce deliberation over small quality differences:
\begin{quote}
\small\itshape
[Randomization] reduces the time needed for an assessment or moderation panel to
discuss applications. Panellists are less likely to spend time trying to arbitrarily differentiate between applications receiving similar quality-based scores that fall around the funding line, because they can ``agree to disagree'' by simply leaving it to chance~\citep{uk_metascience_2025}.
\end{quote}

Indeed, numerous funding agencies have cited reducing reviewer deliberation time as a justification when adopting partial randomization~\citep{liu2020acceptability, trentacosti2021lotteries, formas_explore_2025}.
Proponents of randomization have also argued that a partial lottery reduces arbitrariness near the funding line, encourages high-risk research, and mitigates reviewer partiality~\citep{fang2016research, horbach2022partial,gould2025threats,feliciani2024funding}.

In practice, existing lottery designs for scientific funding sort proposals into three groups: proposals above a high-quality threshold are funded automatically, proposals below a low-quality threshold are rejected, and proposals in the middle are entered into a uniform lottery with equal selection probabilities. This three-tier design is inherently unstable. As illustrated in Figure~\ref{fig:smooth_illustration}, a small increase in one review can move the highlighted candidate from rejection into the lottery group, causing a large jump in selection probability. More generally, any candidate near the boundary between reject, lottery, and accept groups may experience a discontinuous change in selection probability after an arbitrarily small score perturbation. In fact, \citet{goldberg2025principled} show that existing lottery designs can exhibit ``maximum instability,'' where an arbitrarily small change to a single review can shift the mechanism from deterministic selection to uniform random selection over all applicants. Our empirical results in this work show that this instability is not only theoretical. In Section~\ref{sec:exp_baselines}, we find that in real peer review processes at ICLR, NeurIPS, and the Swiss NSF, changing a single review by one point can change an applicant's selection probability by more than $0.3$ under existing lottery designs.

Instability is problematic because it reintroduces the kind of arbitrariness from small changes to review scores that randomization is meant to prevent. If two candidates have nearly identical evaluations, a small change to one review should not sharply change chances of selection. We capture this requirement through \emph{smoothness}: a selection rule is smooth if small changes in review scores can produce only bounded changes in selection probabilities.

In order to guarantee smoothness, we propose a \emph{\ourlottery}. Our \ourlottery rule first aggregates reviews into a score for each candidate. It then assigns selection probabilities that increase linearly with this score, with probabilities clipped at zero and one when candidates are sufficiently weak or strong. As illustrated in Figure~\ref{fig:smooth_illustration}, a small review perturbation moves the highlighted candidate only slightly along the linear part of the probability curve. The result is a small change in selection probability, rather than a discontinuous jump across a decision boundary. The \ourlottery has several practical benefits:
\begin{itemize}[leftmargin=*, itemsep=0.15em, topsep=0.25em]
    \item \emph{Stability.} We formally prove that small changes in individual reviews can only produce bounded changes in selection probabilities.
    \item \emph{Quality awareness.} Higher-scoring candidates receive higher selection probabilities.
    \item \emph{Transparency.} The rule is easy to explain, as selection probabilities increase linearly with estimated quality until they are clipped at zero or one.
    \item \emph{Fairness.} The intuitive notion of Individual Fairness~\citep{dwork2012fairness} requires that candidates with similar reviews receive similar selection probabilities. \citet{bairaktari_fair_cohort_selection} show that the clipped-linear form of the \ourlottery is optimal for trading off Individual Fairness and selection quality. This gives an additional reason to adopt the \ourlottery to ensure fairness in the treatment of similarly evaluated candidates.
\end{itemize}

These benefits require balancing smoothness against selection quality. A selection rule can be perfectly smooth by ignoring the reviews entirely, for example by sampling uniformly from all candidates. At the other extreme, a utility-maximizing rule deterministically selects the top-$\nselected$ candidates, but is not smooth. We formalize this tradeoff between smoothness and utility and show that the \ourlottery is near-optimal. 

Our \textbf{primary contributions} are:
\begin{enumerate}[label=(\arabic*), leftmargin=*, itemindent=0pt]
    \item \textbf{Formalizing smoothness for partial lotteries.} We define smooth selection rules using a Lipschitz condition on the mapping from review scores to marginal selection probabilities. This gives the interpretable guarantee that changing any single review score by one point shifts total selection probabilities by a bounded amount.

    \item \textbf{Design and analysis of the \ourlottery.} We introduce the \ourlottery and prove that it is $\smoothness$-smooth. We also prove a worst-case regret bound matching a lower bound for any smooth selection rule within a factor of up to a factor of $(1 - \nselected/\napps)$, where $\nselected/\napps$ is the acceptance rate.

    \item \textbf{Comparison to alternative stability notions.} We show that standard Differential Privacy (DP) does not imply smoothness, while the generalization of Metric DP gives smoothness but at a factor-$\nselected$ regret penalty in the worst case. We then analyze top-$\nselected$ softmax sampling and prove that it achieves smooth marginal selection probabilities, but incurs higher regret than the \ourlottery.

    \item \textbf{Empirical validation on real peer review data.} On three real peer review datasets (ICLR 2025, NeurIPS 2024, Swiss NSF) we demonstrate that existing partial lottery designs are highly unstable under single-review perturbations and show that the \ourlottery gives significantly better smoothness--utility tradeoffs than existing lottery designs in practice. We also demonstrate that the \ourlottery dominates the softmax rule across smoothness levels and empirically confirm tightness of our theoretical bounds.  
\end{enumerate}

An open-source implementation of our algorithm and all experimental code are available at
\href{https://github.com/akgoldberg/smooth_lottery}{\texttt{github.com/akgoldberg/smooth\_lottery}}.
\section{Related Work}
\label{sec:related_work}

\paragraph{Partial Lotteries for Scientific Funding and Peer Review.}
Recent works have designed algorithms for partial lotteries in scientific funding and peer review. MERIT~\citep{goldberg2025principled} and the Swiss NSF Partial Lottery~\citep{heyard2022rethinking} estimate \emph{intervals} for proposal quality, with overlapping intervals indicating uncertainty about the relative quality of proposals. These methods use intervals to decide when randomization is appropriate.

Our work targets a different design goal. Rather than explicitly modeling uncertainty about proposal quality, we require the mapping from review scores to marginal selection probabilities to be stable.  Empirically, we show that existing interval-based lotteries are highly non-smooth in practice; see Section~\ref{sec:exp_baselines} for details. A key difference between this prior work and our formulation is what information each approach extracts from review scores. In particular, MERIT~\citep{goldberg2025principled} emphasizes the use of ordinal information: intervals encode which proposals may or may not dominate others, making the approach natural when the numerical scale of review scores is not trusted or a funder wishes to avoid making assumptions about how reviewers interpret the scale. In contrast, our smoothness framework treats the review scale as meaningful enough to define small perturbations, and asks that these perturbations have only limited effects on marginal selection probabilities. In Section~\ref{sec:existing_lotteries}, we discuss how smoothness can be combined with interval-based validity constraints when a funder also wants to enforce them.

In practice, the choice depends on what concern is most important. If the funder wants the lottery to reflect uncertainty about which proposals are better, especially when numerical review scores are not fully trusted, then an interval-based method such as MERIT~\citep{goldberg2025principled} is a natural fit. If the funder wants to limit how much any one review score can affect selection probabilities, then \ourlottery is the more direct tool. These concerns can overlap, but the two mechanisms prioritize different failure modes.

\paragraph{Randomized Selection and the \ourlottery.}
The \ourlottery has appeared independently under different names in several contexts. In machine learning, \citet{kong2020_rankmax} introduced it as the \emph{RankMax} operator, an adaptive projection alternative to the softmax function, while \citet{martins2016sparsemax} proposed the closely related \emph{sparsemax} (for the $\nselected = 1$ case). Most relevant to our work, \citet{bairaktari_fair_cohort_selection} studied fair cohort selection and proved that the clipped-linear rule is exactly instance-optimal for maximizing linear utility subject to Individual Fairness constraints; we discuss this result in detail in Section~\ref{sec:individual_fairness}. However, the \ourlottery has not previously been proposed or analyzed as a mechanism for designing partial lotteries in evaluation and selection settings such as peer review, and it is not obvious \emph{a priori} that it is a good mechanism for our problem of interest. We provide a novel smoothness analysis---formalizing a Lipschitz condition on the mapping from review scores to selection probabilities---together with near-matching upper and lower bounds on the worst-case regret of any smooth selection rule. These results establish that the \ourlottery is near-optimal for a fundamentally different objective (the smoothness--regret tradeoff) than those considered in prior work, and that it achieves a more efficient smoothness--regret tradeoff than natural alternatives like the softmax.

\paragraph{Stable Selection.}
There is extensive literature on algorithmic stability notions for selection mechanisms, most prominently Differential Privacy (DP)~\citep{DworkMNS06, dwork2014algorithmic} and Individual Fairness (IF)~\citep{dwork2012fairness}. We compare our smoothness definition and the \ourlottery to these alternatives in Section~\ref{sec:alt_definitions}, showing that standard DP does not imply smooth marginals, that Metric DP implies smooth marginals but at a factor-$\nselected$ regret penalty~\citep{steinke2017interactive, bafna2017price}, and that IF and smoothness are formally incomparable. As part of this comparison, we give a novel analysis of the Lipschitz constant of the top-$\nselected$ softmax mechanism~\citep{mcsherry2007mechanism, kool2019stochastic} (known as the Exponential Mechanism in the DP literature), extending recent bounds for the $\nselected = 1$ case~\citep{nair2025softmax} to the combinatorial setting of $\nselected > 1$, which may be of independent theoretical interest.

\paragraph{Randomization in other aspects of peer review.} There are a few other parts of the peer review process where explicit randomization is used in practice. Computer science conferences employ automated methods to assign reviewers to papers, and randomization~\citep{jecmen2020manipulation} is used in this assignment to mitigate problems of fraud via collusion rings~\citep{littman2021collusion} and identity thefts~\citep{shah2025identity}. The idea here is that no matter how a malicious reviewer and/or author games the system~\citep{hsieh2024vulnerability}, there is a bounded probability of the target reviewer getting assigned target paper. Some computer science conferences use randomization to select which of the accepted papers will be presented as oral presentations~\citep{usenix2025policy,sigmod2025cfp}. A consequence of interest is that randomization can allow for causal analysis of  policies adopted in the review process~\citep{saveski2023counterfactual}. 

\section{Problem Formulation}
\label{sec:problem_formulation}

Our methods apply to settings such as admissions, scientific peer review, and job screening, where decision makers estimate candidate quality and select the top candidates. For concreteness, throughout we will describe a \emph{funder} evaluating \emph{candidates}.

\paragraph{Randomized Selection Rule.}
The funder evaluates $\napps$ candidates and seeks to select $\nselected$ candidates of highest quality. Each candidate receives $\nreviews$ numeric reviews, with scores normalized to lie in the unit interval. The review data is therefore represented as a matrix $\reviewmatrix \in [0,1]^{\napps \times \nreviews}$, where $\reviewmatrix_{i,j}$ denotes the score assigned to candidate $i \in [\napps]$ by reviewer $j \in [\nreviews]$.\footnote{We use the standard notation $[\kappa]$ to denote the set $\{1,\ldots,\kappa\}$ for any positive integer $\kappa$.} For simplicity of exposition, we assume each candidate receives the same number of reviews; our formulation naturally extends to variable numbers of reviews (Appendix~\ref{app:variable_reviews}).

The funder randomizes decisions by specifying a \emph{randomized selection rule}
$
\selectionrule: [0,1]^{\napps \times \nreviews} \to \Delta(\ksubsets),
$
where $\ksubsets = \{S \subseteq [\napps] : |S| = \nselected\}$ is the set of all $\nselected$-size subsets of candidates, and $\Delta(\ksubsets)$ denotes the set of probability distributions over $\ksubsets$. Given review data $\reviewmatrix$, the rule $\selectionrule(\reviewmatrix)$ defines a distribution over size-$\nselected$ subsets of candidates. Since candidates experience decisions through individual acceptance probabilities, we focus on the induced \emph{marginal selection probabilities} $\selectionprob: [0,1]^{\napps \times \nreviews} \to [0,1]^{\napps}$, defined by
$
\selectionprob_i(\reviewmatrix)
=
\PrOp_{S \sim \selectionrule(\reviewmatrix)}[\,i \in S\,]$ for $i \in [\napps]$.

\paragraph{Smoothness.}
We formalize the requirement that the selection rule be stable to changes in review scores as a \emph{smoothness} property: small changes in review scores should lead to small changes in candidates' chances of acceptance. Since these chances are determined by the marginal probabilities, we define smoothness as a Lipschitz condition on $\selectionprob$:

\begin{definition}[$\smoothness$-smooth selection rule]
\label{def:smoothness}
For $\smoothness \ge 0$, the selection rule $\selectionrule$ is \emph{$\smoothness$-smooth} if its induced marginal probability function $\selectionprob$ satisfies
\[
\|\selectionprob(\reviewmatrix) - \selectionprob(\reviewmatrix')\|_1
\;\le\;
\smoothness \,\|\reviewmatrix - \reviewmatrix'\|_{1,1}
\quad
\text{for all }
\reviewmatrix, \reviewmatrix' \in [0,1]^{\napps \times \nreviews},
\]
where $\|A\|_{1,1} = \sum_{i,j} |A_{i,j}|$.
\end{definition}

This condition ensures the stability of candidates' selection chances with respect to review scores: if a single review changes by $\delta$, the total absolute change in selection probabilities is at most $\smoothness\,\delta$.

\paragraph{Utility.}
A selection rule should also select high-quality candidates. Hence, we introduce a utility function that depends on reviews. Given review data $\reviewmatrix$, the funder has utility vector $\util(\reviewmatrix) \in \R^{\napps}$, where $\util_i(\reviewmatrix)$ represents the utility of selecting candidate $i$. The funder's utility when selecting a set of candidates is the sum of the selected candidates' utilities. Since $\selectionrule$ induces marginal probabilities $\selectionprob$, its expected utility is $\selectionprob(\reviewmatrix)^\top \util(\reviewmatrix)$. The \emph{regret} of a selection rule is its utility loss relative to optimal deterministic selection,
$
\regret(\selectionrule; \reviewmatrix)
= \text{OPT}(\reviewmatrix) - \selectionprob(\reviewmatrix)^\top \util(\reviewmatrix),
$
where $\text{OPT}(\reviewmatrix)$ is the utility obtained by selecting the $\nselected$ candidates with highest utility. Our goal is to design a $\smoothness$-smooth selection rule that minimizes the worst-case regret, giving the minimax design problem:
\[
\begin{aligned}
\min_{\selectionrule: [0,1]^{\napps \times \nreviews} \to \Delta(\ksubsets)}
\quad &
\max_{\reviewmatrix \in [0,1]^{\napps \times \nreviews}}
\left\{
\text{OPT}(\reviewmatrix)
-
\selectionprob(\reviewmatrix)^\top \util(\reviewmatrix)
\right\} \\
\text{subject to}
\quad &
\|\selectionprob(\reviewmatrix)-\selectionprob(\reviewmatrix')\|_1
\le
\smoothness \|\reviewmatrix-\reviewmatrix'\|_{1,1},
\qquad
\forall \reviewmatrix,\reviewmatrix' \in [0,1]^{\napps \times \nreviews},
\end{aligned}
\]
where $\selectionprob$ denotes the marginal selection probabilities induced by $\selectionrule$.

\paragraph{Smooth Utility Function.}
We assume the funder evaluates candidates using a utility function that varies smoothly with the review scores. Specifically:
\begin{definition}[$\utillipschitz$-Lipschitz utility function]
\label{def:lipschitz_utility}
Utility function $\util: [0,1]^{\napps \times \nreviews} \to \R^{\napps}$ is \emph{$\utillipschitz$-Lipschitz} if for all review matrices $\reviewmatrix, \reviewmatrix' \in [0,1]^{\napps \times \nreviews}$, $\|\util(\reviewmatrix) - \util(\reviewmatrix')\|_1 \le \utillipschitz \,\|\reviewmatrix - \reviewmatrix'\|_{1,1}$.
\end{definition}
For example, the mean score $\util_i(\reviewmatrix) = \tfrac{1}{\nreviews}\sum_{j=1}^{\nreviews} \reviewmatrix_{i,j}$ is $\utillipschitz$-Lipschitz for $\utillipschitz=1/\nreviews$. The median, min, and max score are all $1$-Lipschitz. The Lipschitz utility function assumption is necessary. If the utility function were arbitrarily non-smooth, then a small change in review scores could produce a large change in utility, making it impossible for any smooth selection rule to achieve low worst-case regret with respect to this utility function.

\section{Optimal Smooth Lottery Design}
\label{sec:lottery_theory}

In Section~\ref{sec:linear_lottery}, we introduce a simple, efficient, and interpretable \emph{\ourlottery} and analyze its smoothness and regret; in Section~\ref{sec:lower_bound}, we prove a lower bound establishing optimality of the \ourlottery up to a factor of $\frac{1}{1 - \nselected/\napps}$. Since smoothness and regret depend only on the marginal probabilities, we design mappings from review data to selection probabilities; any sampling scheme implementing these marginals can be used for the final selection.

\subsection{\ourlottery}
\label{sec:linear_lottery}

We now introduce the \emph{\ourlottery}, a clipped-linear partial lottery described in Algorithm~\ref{alg:linear_lottery}. The selection rule assigns marginal selection probabilities by scaling candidate utilities, shifting them by a common intercept to satisfy the budget, and clipping the result to $[0,1]$. This partitions candidates into three tiers: an auto-accept pool ($\selectionprob_i=1$), an auto-reject pool ($\selectionprob_i=0$), and a lottery pool where selection probabilities scale linearly with estimated quality. The slope $\slope$ controls the smoothness--utility trade-off. Smaller $\smoothness$ gives a flatter slope, expanding the lottery pool toward uniform selection. Larger $\smoothness$ gives a steeper slope, shrinking the lottery pool toward deterministic top-$\nselected$ selection. Figure~\ref{fig:linear_lottery_example} illustrates the \ourlottery on a concrete example. 

\input{figures/linear_lottery_example}

\renewcommand{\algorithmicrequire}{\textbf{Input:}}

\begin{algorithm}[tb]
\small
\caption{\ourlottery~\citep{kong2020_rankmax,bairaktari_fair_cohort_selection}}
\label{alg:linear_lottery}
\begin{algorithmic}[1]
\REQUIRE Reviews $\reviewmatrix$, utility $\util$ with Lipschitz constant $\utillipschitz$, smoothness parameter $\smoothness>0$, budget $\nselected$
\STATE Compute utilities $u_i \gets \util_i(\reviewmatrix)$ for all $i \in [\napps]$
\STATE Set the slope $\slope \gets \smoothness/(2\utillipschitz)$ 
\STATE Scale utilities $z_i \gets \slope \cdot u_i$
\STATE Compute intercept $\intercept \in \mathbb{R}$ satisfying budget constraint, 
\[
\sum_{i=1}^{\napps} \clip_{[0,1]}(z_i+\intercept)=\nselected
\]
where $\clip_{[0,1]}(x)=\min\{1,\max\{0,x\}\}$, using Algorithm~\ref{alg:linear_lottery_simple} (Appendix~\ref{app:linear_lottery_algo}).
\STATE Set marginal probabilities
\[
\selectionprob_i(\reviewmatrix)
\gets
\clip_{[0,1]}(z_i+\intercept),
\qquad i \in [\napps]
\]
\STATE Sample size-$\nselected$ set $S \subset [\napps]$ with marginal probabilities $\selectionprob(\reviewmatrix)$, e.g., using systematic sampling~\citep{madow1949theory}
\RETURN Marginal probabilities $\selectionprob(\reviewmatrix)$ and selected set $S$
\end{algorithmic}
\end{algorithm}

The \ourlottery has several useful properties.

\paragraph{Uniqueness.}
For every utility vector and budget, the \ourlottery produces a unique probability vector~\citep{wang2015projection}.

\paragraph{Efficient computation.}
There are two computationally nontrivial steps in Algorithm~\ref{alg:linear_lottery}. First, we must compute an intercept $\intercept$ satisfying the budget constraint (line 4). This can be done efficiently using existing algorithms for projection onto the capped simplex~\citep{wang2015projection,bairaktari_fair_cohort_selection}; for completeness, Algorithm~\ref{alg:linear_lottery_simple} in Appendix~\ref{app:linear_lottery_algo} gives one exact implementation. Second, the \ourlottery specifies marginal selection probabilities rather than a unique joint distribution over selected sets. Given marginals $\selectionprob$, any sampling scheme that returns size-$\nselected$ subsets with these marginals can be used for the final selection (line 6). One simple option is classical systematic sampling~\citep{madow1949theory}.

\paragraph{Monotonicity in budget.}
For fixed utilities and smoothness parameter $\smoothness$, increasing the budget $\nselected$ can never decrease any candidate's selection probability. This contrasts with existing lottery designs, which can violate this natural monotonicity property~\citep{goldberg2025principled}.

\paragraph{Monotonicity of the lottery pool.}
For fixed utilities and budget, the size of the active lottery pool $\{i : 0 < \selectionprob_i < 1\}$ is non-increasing in the smoothness parameter $\smoothness$. Equivalently, enforcing a smoother lottery by decreasing $\smoothness$ can only increase, or leave unchanged, the number of candidates subject to randomization. Thus, a funder that wants to cap the lottery-pool size, for example for political or institutional reasons, can sweep over $\smoothness$ and choose the smoothest lottery that satisfies this constraint.

Formal statements and proofs of the monotonicity properties appear in Appendix~\ref{app:linear_lottery_monotonicity}.

\subsubsection{Smoothness}

We first show that the \ourlottery satisfies the target smoothness guarantee. The following theorem shows that setting $\slope=\smoothness/(2\utillipschitz)$, as in Algorithm~\ref{alg:linear_lottery}, is sufficient to guarantee $\smoothness$-smoothness.

\begin{theorem}[Smoothness of the \ourlottery]
\label{thm:linear_smoothness}
If $\util$ is $\utillipschitz$-Lipschitz, then the \ourlottery in Algorithm~\ref{alg:linear_lottery} guarantees $\smoothness$-smoothness.
\end{theorem}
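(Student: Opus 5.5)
Since $\selectionprob(\reviewmatrix) = F(\slope\,\util(\reviewmatrix))$, where $F:\R^{\napps}\to[0,1]^{\napps}$ maps a vector $z$ to $\clip_{[0,1]}(z+\intercept(z)\mathbf{1})$, I will split the proof into two Lipschitz bounds and compose them. By Definition~\ref{def:lipschitz_utility}, $\|\slope\util(\reviewmatrix)-\slope\util(\reviewmatrix')\|_1\le \slope\utillipschitz\|\reviewmatrix-\reviewmatrix'\|_{1,1}$. It then suffices to show that $F$ is $2$-Lipschitz in $\ell_1$, i.e.\ $\|F(z)-F(z')\|_1\le 2\|z-z'\|_1$. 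Composing the two bounds gives the constant $2\slope\utillipschitz=\smoothness$ for the choice $\slope=\smoothness/(2\utillipschitz)$.

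To prove that $F$ is $2$-Lipschitz, I will use the fact that $\intercept(z)$ is well defined. The map $g_z(\tau)=\sum_i\clip_{[0,1]}(z_i+\tau)$ is continuous and nondecreasing, so some $\tau$ with $g_z(\tau)=\nselected$ exists; non-uniqueness of $\tau$ does not affect the marginals, by the uniqueness property cited earlier. Because each coordinate of $F$ is a clipped function of $z_i+\tau$, it suffices by a path argument to handle a change in a single coordinate. Write $z'=z+\delta e_j$ with $\delta>0$ (the case $\delta<0$ is symmetric), and general $z,z'$ follow by changing one coordinate at a time and applying the triangle inequality. Monotonicity of $g$ in both $z$ and $\tau$ lets me choose $\tau'\le\tau$. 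Then for every $i\ne j$, $F_i(z')=\clip(z_i+\tau')\le F_i(z)$, and the decrease is at most $\tau-\tau'$. For coordinate $j$, $F_j$ changes by at most $|\delta|+|\tau-\tau'|$ in absolute value; more to the point, the budget constraint forces $\sum_i F_i$ to stay equal to $\nselected$.

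The key step uses the budget. Since $\sum_i F_i(z')=\sum_i F_i(z)$, the total decrease over the coordinates $i\ne j$ equals the change $F_j(z')-F_j(z)$. That change must therefore be nonnegative, and it is at most $\delta$: indeed $F_j(z')-F_j(z)\le \clip(z_j+\delta+\tau)-\clip(z_j+\tau)\le\delta$, where the first inequality uses $\tau'\le\tau$. Hence
\[
\|F(z')-F(z)\|_1 = 2\,\bigl(F_j(z')-F_j(z)\bigr)\le 2|\delta| .
\]
This sign-consistency fact (all other coordinates move weakly opposite to coordinate $j$) is the crux. The step I expect to need the most care is justifying it rigorously when $\tau$ is not unique, namely when $g_z$ is flat at level $\nselected$. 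I would handle that by defining $\tau(z)$ canonically, say as the smallest solution, and checking that this choice is monotone in $z$, or by appealing directly to the uniqueness of the marginals. Summing over coordinates then yields $\|F(z)-F(z')\|_1\le2\|z-z'\|_1$, which completes the proof.
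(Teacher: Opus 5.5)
Your proof is correct, but it is organized differently from the paper's. You write $\selectionprob = F\circ(\slope\util)$ and reduce the $2$-Lipschitz property of $F$ to single-coordinate moves via a telescoping path. For a move $z'=z+\delta e_j$ you use the budget as a conservation law: with $\tau'\le\tau$, every other coordinate weakly decreases, so $\|F(z')-F(z)\|_1 = 2\,(F_j(z')-F_j(z))\le 2\delta$.

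The paper instead handles an arbitrary pair $u,u'$ in one step. It inserts the intermediate vector $\clip_{[0,1]}(\slope u'_i+\intercept)$, i.e.\ new utilities with the old intercept.
\begin{itemize}
\item The first piece is at most $\slope\|u-u'\|_1$ because clipping is $1$-Lipschitz.
\item In the second piece all terms share a sign, by monotonicity in the intercept. Their absolute values therefore sum to $\bigl|\sum_i \clip_{[0,1]}(\slope u'_i+\intercept)-\nselected\bigr|$. The budget constraint rewrites this as $\bigl|\sum_i \clip_{[0,1]}(\slope u'_i+\intercept)-\sum_i \clip_{[0,1]}(\slope u_i+\intercept)\bigr|$, which is again at most $\slope\|u-u'\|_1$.
\end{itemize}

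Both arguments rest on the same three facts: clipping is $1$-Lipschitz, it is monotone in the intercept, and the budget is fixed. The paper's version needs no path argument and never selects a particular intercept, since it works for any valid $\intercept,\intercept'$. Yours gives a sharper local picture: for a single-coordinate change, the total $\ell_1$ movement is exactly twice the moved candidate's own change. That makes the origin of the factor $2$, and the paper's near-tightness example, transparent.

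The non-uniqueness issue you flag as the delicate step is already settled by your own setup, so no canonical root or monotone selection of the smallest solution is needed.
\begin{itemize}
\item Existence of $\tau'\le\tau$: the function $g_{z'}$ is continuous and nondecreasing, it equals $0$ for sufficiently negative $\tau$, and $g_{z'}(\tau)\ge g_z(\tau)=\nselected$. The intermediate value theorem therefore gives a solution $\tau'\le\tau$ directly.
\item Independence of the choice: on any interval where $g$ is constant, each nondecreasing summand must itself be constant. So the marginals do not depend on which solution is used.
\end{itemize}
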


This smoothness bound is tight within a factor of $1 - \frac{1}{\napps}$; that is, there are instances where the \ourlottery is at least $(1 - \frac{1}{\napps})\smoothness$-smooth. We give the full proof in Appendix~\ref{app:proof_linear_smoothness} and empirically validate tightness in Section~\ref{sec:exp_smoothness_tightness}.

The proof intuition is as follows. If $\intercept$ were fixed, then changing the reviews by magnitude $\delta$ could change the scaled utilities by at most $\slope \utillipschitz \delta$, since $\util$ is $\utillipschitz$-Lipschitz and clipping is $1$-Lipschitz. The remaining factor of $2$ comes from enforcing the budget. The intercept $\intercept$ must shift so that $\sum_i \selectionprob_i(\reviewmatrix)=\nselected$, and this common shift can change many candidates' probabilities at once. In the worst case, this doubles the total $\ell_1$ change in marginal probabilities, giving at most $2\slope\utillipschitz\delta$. Thus choosing $\slope=\smoothness/(2\utillipschitz)$ guarantees $\smoothness$-smoothness.

\subsubsection{Regret}

We next bound the worst-case regret. Before doing so, we establish a geometric characterization of the \ourlottery that plays a central role in the analysis.

\begin{proposition}[Projection characterization~\citep{kong2020_rankmax}]
\label{prop:projection}
For every $\reviewmatrix$, the \ourlottery outputs the Euclidean projection of $\slope \cdot \,\util(\reviewmatrix)$ onto the capped simplex $\mathcal{C}_{\napps,\nselected} = \{\selectionprob \in [0,1]^{\napps} : \|\selectionprob\|_1 = \nselected\}$. That is,
$
\selectionprob(\reviewmatrix)
= \argmin_{p \in \mathcal{C}_{\napps,\nselected}} \tfrac{1}{2}\|p - \slope \cdot \,\util(\reviewmatrix)\|_2^2.
$
\end{proposition}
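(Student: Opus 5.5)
The plan is to verify the KKT conditions of the projection problem directly. The problem
\[
\min_{p \in \mathcal{C}_{\napps,\nselected}} \tfrac{1}{2}\|p - z\|_2^2, \qquad z = \slope\cdot\util(\reviewmatrix),
\]
has a strictly convex objective. Its feasible set $\mathcal{C}_{\napps,\nselected}$ is a nonempty compact convex polytope; it is nonempty because $\nselected \le \napps$. Hence a minimizer exists and is unique. The constraints are linear, so the KKT conditions are both necessary and sufficient for optimality, with no constraint qualification needed beyond linearity. It therefore suffices to exhibit multipliers under which the output $\selectionprob_i = \clip_{[0,1]}(z_i+\intercept)$ of Algorithm~\ref{alg:linear_lottery} satisfies KKT.

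First, I will write the Lagrangian with a multiplier $-\intercept$ for the equality constraint $\sum_i p_i = \nselected$, and multipliers $\mu_i\ge 0$ for $p_i \ge 0$ and $\nu_i \ge 0$ for $p_i\le 1$. Stationarity then reads
\[
p_i - z_i - \intercept - \mu_i + \nu_i = 0 .
\]
Complementary slackness requires $\mu_i p_i = 0$ and $\nu_i(1-p_i)=0$.

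Second, I will choose the multipliers coordinatewise by cases on $z_i+\intercept$:
\begin{itemize}
\item If $z_i + \intercept \in [0,1]$, then $p_i = z_i+\intercept$, and I set $\mu_i=\nu_i=0$.
\item If $z_i + \intercept < 0$, then $p_i = 0$, and I set $\mu_i = -(z_i+\intercept) > 0$ and $\nu_i=0$.
\item If $z_i+\intercept>1$, then $p_i=1$, and I set $\nu_i = z_i+\intercept-1>0$ and $\mu_i=0$.
\end{itemize}
In each case stationarity, dual feasibility and complementary slackness are immediate. Primal feasibility holds because clipping lands in $[0,1]$ and $\intercept$ is chosen in line 4 precisely so that $\sum_i p_i=\nselected$. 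Hence the algorithm's output is the unique projection.

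The one step needing care is that the $\intercept$ in line 4 exists. The map $g(\intercept)=\sum_i \clip_{[0,1]}(z_i+\intercept)$ is continuous and nondecreasing. It equals $0$ for $\intercept \le -\max_i z_i$ and equals $\napps$ for $\intercept \ge 1-\min_i z_i$, so by the intermediate value theorem some $\intercept$ attains $g(\intercept)=\nselected$. Any such $\intercept$ yields the same $\selectionprob$, since every choice gives a KKT point and the minimizer is unique. This also recovers the uniqueness property cited earlier. I do not expect a real obstacle; the only subtlety is the possible non-uniqueness of $\intercept$ when $g$ is flat, and the uniqueness of the projection resolves it.
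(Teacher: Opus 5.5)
Your proof is correct and complete. The case-by-case multipliers satisfy stationarity, dual feasibility and complementary slackness. Convexity of the problem makes any KKT point a global minimizer, and strict convexity of the objective gives uniqueness. The intermediate value argument correctly supplies an intercept for every budget $0 \le \nselected \le \napps$.

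The paper does not prove this proposition itself. It attributes the result to \citet{kong2020_rankmax} and cites \citet{wang2015projection} for uniqueness. Your argument is therefore a self-contained justification, not an alternative to a proof in the text.

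Your route also gives, as by-products, two facts the paper handles separately:
\begin{enumerate}
\item Every intercept $\intercept$ with $\sum_i \clip_{[0,1]}(z_i+\intercept)=\nselected$ yields a KKT point. So uniqueness of the projection makes $\selectionprob$ independent of which intercept Algorithm~\ref{alg:linear_lottery_simple} returns. This is a cleaner justification than the plateau argument in Appendix~\ref{app:linear_lottery_algo}.
\item Your stationarity condition gives $\selectionprob_i - z_i = \intercept + \mu_i - \nu_i$. Substituting this into $(\selectionprob - z)^\top(q-\selectionprob)$ and using complementary slackness ($\mu_i\selectionprob_i=0$, $\nu_i\selectionprob_i=\nu_i$) gives $\sum_i \mu_i q_i + \sum_i \nu_i(1-q_i) \ge 0$ for every $q \in \mathcal{C}_{\napps,\nselected}$. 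This is exactly the projection optimality inequality invoked in the proof of Theorem~\ref{thm:linear_regret}.
\end{enumerate}
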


Because the \ourlottery finds the valid probability distribution that is geometrically closest to the scaled utilities, we can use the properties of this projection to bound regret. Our proof (Appendix~\ref{app:proof_linear_regret}) leverages this interpretation, algebraically rewriting the regret as a simple quadratic function that depends only on the chosen probabilities:

\begin{theorem}[Regret of the \ourlottery]
\label{thm:linear_regret}
The worst-case regret of the \ourlottery satisfies
\[
\max_{\reviewmatrix}
\regret(\selectionrule; \reviewmatrix)
\le
\frac{\nselected\!\left(1 - \frac{\nselected}{\napps}\right)\utillipschitz}
{2\,\smoothness}.
\]
\end{theorem}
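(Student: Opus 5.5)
The plan is to use Proposition~\ref{prop:projection} to turn the regret into a quantity that depends only on the output probabilities $\selectionprob$, and then to bound that quantity by a one-dimensional optimization. Fix $\reviewmatrix$ and write $u=\util(\reviewmatrix)$, $z=\slope u$ with $\slope=\smoothness/(2\utillipschitz)$, and $p=\selectionprob(\reviewmatrix)$. Let $q\in\{0,1\}^{\napps}$ be the indicator of a top-$\nselected$ set $T$ of $u$, so that $\mathrm{OPT}(\reviewmatrix)=q^\top u$. Then
\[
\regret(\selectionrule;\reviewmatrix)=u^\top(q-p)=\tfrac{1}{\slope}\,z^\top(q-p).
\]
The target bound equals $\nselected(1-\nselected/\napps)/(4\slope)$. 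It therefore suffices to show $z^\top(q-p)\le \nselected(\napps-\nselected)/(4\napps)$.

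\textbf{Step 1 (variational inequality).} Since $p$ is the Euclidean projection of $z$ onto the convex set $\mathcal{C}_{\napps,\nselected}$, and $q\in\mathcal{C}_{\napps,\nselected}$, the first-order optimality condition gives $(z-p)^\top(q-p)\le 0$. Hence $z^\top(q-p)\le p^\top(q-p)$. This removes the utilities entirely and leaves a quadratic in $p$:
\[
p^\top(q-p)=\sum_{i\in T}p_i-\sum_{i\in T}p_i^2-\sum_{i\notin T}p_i^2 .
\]

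\textbf{Step 2 (reduce to one variable).} Set $b=\sum_{i\notin T}p_i$. The budget $\|p\|_1=\nselected$ gives $\sum_{i\in T}p_i=\nselected-b$. By Cauchy--Schwarz,
\[
\sum_{i\in T}p_i^2\ge (\nselected-b)^2/\nselected, \qquad \sum_{i\notin T}p_i^2\ge b^2/(\napps-\nselected).
\]
Substituting, the expression is at most
\[
(\nselected-b)-\frac{(\nselected-b)^2}{\nselected}-\frac{b^2}{\napps-\nselected}
= b-b^2\Bigl(\frac{1}{\nselected}+\frac{1}{\napps-\nselected}\Bigr)
= b-\frac{\napps\, b^2}{\nselected(\napps-\nselected)}.
\]
This concave quadratic in $b$ is maximized at $b=\nselected(\napps-\nselected)/(2\napps)$, with value $\nselected(\napps-\nselected)/(4\napps)$. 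Dividing by $\slope$ gives the theorem. If $\nselected=\napps$, then $p=q$ and the regret is zero, so the Cauchy--Schwarz step over $T^c$ is only needed when $\nselected<\napps$.

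\textbf{Main obstacle.} The delicate point is obtaining the factor $(1-\nselected/\napps)$ rather than the cruder bound $\nselected/4$. The crude bound follows from $p_i(1-p_i)\le 1/4$ on $T$ after discarding the negative terms $-p_i^2$ off $T$. The refinement comes from keeping those negative terms and coupling the two groups through the budget constraint, which is exactly what Step~2 does. One should also check that Step~1 genuinely applies: this requires $q\in\mathcal{C}_{\napps,\nselected}$ and the projection characterization of Proposition~\ref{prop:projection}, both of which hold.
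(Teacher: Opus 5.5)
Your proposal is correct and follows the paper's proof: you apply the same projection variational inequality with the top-$\nselected$ indicator $q$, reduce the regret to $\frac{1}{\slope}\bigl(\sum_{i\in T}p_i-\sum_i p_i^2\bigr)$, and maximize this concave quadratic over the capped simplex. The only difference is that you bound the quadratic via Cauchy--Schwarz and a one-variable optimization in $b$, whereas the paper just states the KKT maximizer ($p_i=\tfrac12(1+\nselected/\napps)$ on $T$, $p_i=\nselected/(2\napps)$ off $T$); both give the value $\nselected(1-\nselected/\napps)/4$, and your version writes that step out explicitly.
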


\begin{corollary}[Mean utility]
\label{corr:linear_regret_mean_util}
If utilities are given by the mean review score for each candidate, then the worst-case regret of the \ourlottery satisfies
\[
\max_{\reviewmatrix}
\regret(\selectionrule; \reviewmatrix)
\le
\frac{\nselected\!\left(1 - \frac{\nselected}{\napps}\right)}
{2\,\smoothness \nreviews}.
\]
\end{corollary}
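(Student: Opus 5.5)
The corollary is a direct specialization of Theorem~\ref{thm:linear_regret}, so the plan is to check its single hypothesis and substitute. Theorem~\ref{thm:linear_regret} holds for any $\utillipschitz$-Lipschitz utility function, provided the \ourlottery is run with the slope $\slope = \smoothness/(2\utillipschitz)$ from Algorithm~\ref{alg:linear_lottery}. So the work reduces to two steps. First, identify the Lipschitz constant of the mean-score utility. Second, substitute that constant into the bound.

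For the first step, take $\util_i(\reviewmatrix) = \tfrac{1}{\nreviews}\sum_{j} \reviewmatrix_{i,j}$. For any $\reviewmatrix,\reviewmatrix'$, the triangle inequality gives
\[
\|\util(\reviewmatrix)-\util(\reviewmatrix')\|_1 = \sum_{i}\Bigl|\tfrac{1}{\nreviews}\sum_j (\reviewmatrix_{i,j}-\reviewmatrix'_{i,j})\Bigr| \le \tfrac{1}{\nreviews}\|\reviewmatrix-\reviewmatrix'\|_{1,1}.
\]
Hence the mean utility is $\utillipschitz$-Lipschitz with $\utillipschitz = 1/\nreviews$ in the sense of Definition~\ref{def:lipschitz_utility}, as the paper already remarks after that definition.

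For the second step, the algorithm is run with this $\utillipschitz$, so the slope is $\slope = \smoothness\nreviews/2$. Substituting $\utillipschitz = 1/\nreviews$ into Theorem~\ref{thm:linear_regret} gives
\[
\max_{\reviewmatrix}\regret(\selectionrule;\reviewmatrix) \le \frac{\nselected(1-\nselected/\napps)}{2\smoothness\nreviews}.
\]

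There is no real obstacle here. The only care needed is to make sure the regret is measured with respect to the same mean utility used to run the lottery, and that the $\utillipschitz$ passed to Algorithm~\ref{alg:linear_lottery} is exactly $1/\nreviews$, so the hypotheses of Theorem~\ref{thm:linear_regret} match verbatim.
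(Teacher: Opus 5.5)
Your proposal is correct and matches the paper's argument. The paper likewise derives the corollary immediately from Theorem~\ref{thm:linear_regret}: it notes that the mean score is $\utillipschitz$-Lipschitz with $\utillipschitz = 1/\nreviews$ (the fact you verify via the triangle inequality) and substitutes this value into the bound.
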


To interpret the bound, note that regret always lies in $[0,\nselected]$ because utilities lie in $[0,1]$. Uniform random selection among all candidates is perfectly smooth and has worst-case regret $\nselected(1-\nselected/\napps)$. Theorem~\ref{thm:linear_regret} bounds the regret of the \ourlottery by this natural baseline scaled by $\utillipschitz/(2\smoothness)$: larger $\smoothness$ permits a steeper rule and therefore yields a smaller regret bound. Corollary~\ref{corr:linear_regret_mean_util} follows immediately because the mean utility has Lipschitz constant $\utillipschitz = 1/\nreviews$. Intuitively, as the number of reviews per candidate increases, each individual review has less influence on the mean, so the worst-case regret bound decreases for any fixed smoothness parameter.

\subsection{Regret Lower Bound}
\label{sec:lower_bound}

We next show that the \ourlottery achieves nearly the lowest regret attainable by any smooth mechanism. For simplicity, we state the result for mean utility in the main text; however, the proof technique and near-optimality guarantee extend to many other utility functions, including the median, minimum, and maximum review score~(Appendix~\ref{app:general_lb}).

\begin{theorem}[Regret lower bound for $\smoothness$-smooth selection rules with mean utility]
\label{thm:lower-bound}
Let utilities be the mean review scores $\util_i(\reviewmatrix) = \tfrac{1}{\nreviews}\sum_{j=1}^{\nreviews}\reviewmatrix_{i,j}$, and let $\selectionrule$ be any $\smoothness$-smooth selection rule. Then, the worst-case regret of $\selectionrule$ is at least
\[
\max_{\reviewmatrix}
\regret(\selectionrule; \reviewmatrix)
\;\ge\;
\begin{cases}
\displaystyle
\frac{\nselected\!\left(1-\frac{\nselected}{\napps}\right)^2}
{2\,\nreviews\,\smoothness},
& \text{if }
\displaystyle
\smoothness \ge \frac{1}{\nreviews}\!\left(1-\frac{\nselected}{\napps}\right), \\[1.2em]
\displaystyle
\nselected\!\left(1-\frac{\nselected}{\napps}
-\frac{\smoothness\,\nreviews}{2}\right),
& \text{if }
\displaystyle
\smoothness < \frac{1}{\nreviews}\!\left(1-\frac{\nselected}{\napps}\right).
\end{cases}
\]
\end{theorem}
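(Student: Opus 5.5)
The plan is a two-point argument: a symmetric baseline instance, together with one perturbed instance in which a carefully chosen set of $\nselected$ candidates becomes strictly better than everyone else. Smoothness (Definition~\ref{def:smoothness}) then caps how much selection mass can move onto those candidates, and this forces regret.

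\emph{Baseline and choice of target set.} Let $\reviewmatrix^0$ be the all-zero matrix, so every candidate has utility $0$. Since $\sum_i \selectionprob_i(\reviewmatrix^0)=\nselected$, pick $T\subseteq[\napps]$ with $|T|=\nselected$ consisting of the $\nselected$ candidates with the smallest baseline probabilities. By averaging, $\sum_{i\in T}\selectionprob_i(\reviewmatrix^0)\le \nselected\cdot\nselected/\napps$. This averaging step is what lets the argument handle arbitrary, non-symmetric rules. It is also where the factor $(1-\nselected/\napps)$ enters.

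\emph{Perturbed instance.} For $t\in[0,1]$, let $\reviewmatrix^t$ set every review of each candidate in $T$ to $t$ and leave all other entries at $0$. Then the candidates in $T$ have mean utility $t$, all others have utility $0$, $\text{OPT}(\reviewmatrix^t)=\nselected t$, and $\|\reviewmatrix^t-\reviewmatrix^0\|_{1,1}=\nselected\nreviews t$.

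The key observation is that both marginal vectors sum to $\nselected$. Hence any gain $\Delta$ in $\sum_{i\in T}\selectionprob_i$ is matched by an equal loss outside $T$, so $\|\selectionprob(\reviewmatrix^t)-\selectionprob(\reviewmatrix^0)\|_1\ge 2\Delta$. Combining this with $\smoothness$-smoothness gives $\Delta\le \smoothness\nselected\nreviews t/2$. The expected utility is $t\sum_{i\in T}\selectionprob_i(\reviewmatrix^t)$, since the other candidates contribute $0$. Therefore
\[
\regret(\selectionrule;\reviewmatrix^t)\;\ge\;\nselected t-t\Bigl(\tfrac{\nselected^2}{\napps}+\tfrac{\smoothness\nselected\nreviews t}{2}\Bigr)
=\nselected t\Bigl(1-\tfrac{\nselected}{\napps}-\tfrac{\smoothness\nreviews t}{2}\Bigr).
\]

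\emph{Optimizing $t$.} The right-hand side is a concave quadratic in $t$, maximized at $t^\ast=(1-\nselected/\napps)/(\smoothness\nreviews)$.
\begin{itemize}
\item If $t^\ast\le 1$, which is exactly the condition $\smoothness\ge(1-\nselected/\napps)/\nreviews$, substituting $t^\ast$ gives $\nselected(1-\nselected/\napps)^2/(2\nreviews\smoothness)$.
\item Otherwise, take $t=1$, which gives $\nselected(1-\nselected/\napps-\smoothness\nreviews/2)$.
\end{itemize}
These are the two cases of Theorem~\ref{thm:lower-bound}.

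The main obstacle is getting the constant $2$ rather than $4$. A naive use of the Lipschitz bound only gives $\Delta\le\smoothness\nselected\nreviews t$, which loses a factor of two. The conservation-of-mass observation above is what recovers that factor.
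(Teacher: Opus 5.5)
Your proposal is correct and is essentially the paper's proof. Both use the all-zero baseline and take $T$ to be the $\nselected$ least-likely candidates, so that $\sum_{i\in T}p^0_i\le \nselected^2/\napps$. Both then raise all reviews in $T$ to a common value, use budget conservation to halve the $\ell_1$ smoothness bound on the mass gained by $T$, and optimize the resulting concave quadratic over $[0,1]$, giving the same two regimes and constants.
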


Theorem~\ref{thm:lower-bound} shows that the \ourlottery is near-optimal for the smoothness--regret tradeoff. In the regime
$\smoothness \ge \frac{1}{\nreviews}(1-\frac{\nselected}{\napps})$, the lower bound differs from the regret upper bound of the \ourlottery (Corollary~\ref{corr:linear_regret_mean_util}) only by a factor of $1-\frac{\nselected}{\napps}$. Thus, no smooth selection rule can substantially improve over the \ourlottery in worst-case regret. The regime $\smoothness \ge 1/\nreviews$ is especially natural because the mean review score itself is $(1/\nreviews)$-Lipschitz, so this corresponds to requiring selection probabilities to be roughly as smooth as the utility function. In the opposite limit, as $\smoothness \to 0$, the lower bound approaches $\nselected(1-\nselected/\napps)$, matching the regret of uniform random selection, whose probabilities are completely independent of the reviews.

The proof, given in Appendix~\ref{app:proof_regret_lower_bound}, uses an indistinguishability argument. Start from a baseline instance in which all candidates receive identical zero scores. Since the mechanism must select $\nselected$ candidates in expectation, some candidates must receive relatively low selection probability. We then perturb the scores of these least-likely candidates by a small amount $\delta$, making them the uniquely best candidates. Smoothness limits how much their selection probabilities can increase in response to this perturbation. As a result, any smooth rule must leave substantial probability mass on lower-quality candidates, incurring regret that scales with the strictness of the smoothness constraint.

Our argument extends to many row-wise utility functions beyond the mean, including the minimum, maximum, and median review score. Appendix~\ref{app:general_lb} gives the general version and shows that the \ourlottery remains near-optimal, up to the same factor of $1-\frac{\nselected}{\napps}$, for these utility functions as well.

\section{Alternative Approaches to Stable Selection}
\label{sec:alt_definitions}

We compare our smooth selection constraint against three prominent alternatives: Individual Fairness (IF), Differential Privacy (DP), and softmax-based randomized selection. As summarized in Table~\ref{tab:stability_comparison}, these perspectives lead to sharply different guarantees. IF and smooth marginals are formally incomparable, although the \ourlottery is also instance-optimal for IF (Section~\ref{sec:individual_fairness}). Standard DP does not imply $\ell_{1,1}$-smooth marginals because its neighboring relation treats arbitrarily small and large single review changes the same way (Section~\ref{sec:differential_privacy}). Metric DP fixes this discontinuity issue and implies smooth marginals, but using it to certify $\smoothness$-smoothness incurs a $\nselected\log(\napps/\nselected)$-fold regret penalty compared to the \ourlottery. Finally, top-$\nselected$ softmax achieves smooth marginals but with higher regret than the \ourlottery (Section~\ref{sec:softmax}).

\newcolumntype{L}[1]{>{\RaggedRight\arraybackslash}p{#1}}
\begin{table}[t]
\centering
\small
\begin{tabular}{@{} l L{0.31\linewidth} L{0.27\linewidth} L{0.24\linewidth} @{}}
\toprule
\textbf{} & \textbf{Definition} & \textbf{Implication} & \textbf{Regret} \\
\midrule
\makecell[tl]{\textbf{Smooth} \\ \textbf{selection} \\ (Def.~\ref{def:smoothness})}
& Marginal selection probabilities vary smoothly with review scores.
& No DP or IF guarantee.
& \ourlottery achieves near-optimal regret. \\
\midrule
\makecell[tl]{\textbf{Individual} \\ \textbf{fairness} (IF)}
& Candidates with similar utilities receive similar selection probabilities.
& No smooth marginals guarantee.
& \ourlottery is optimal. \\
\midrule
\makecell[tl]{\textbf{Differential} \\ \textbf{privacy} \\ ($\varepsilon$-DP)}
& Joint distribution over accepted candidates is stable under one-entry changes to the review matrix.
& Does not imply $\ell_{1,1}$-smooth marginals; arbitrarily small score changes may still cause discontinuous marginal changes.
& Not a direct certification of smoothness. \\
\midrule
\makecell[tl]{\textbf{Metric} \\ \textbf{DP}}
& Joint distribution changes with $\exp(\varepsilon\|\reviewmatrix-\reviewmatrix'\|_{1,1})$.
& Implies smooth marginals with $\smoothness \le \varepsilon\nselected$.
&  Regret worse by a factor of $\nselected\log(\napps/\nselected)$. \\
\bottomrule
\end{tabular}
\caption{Comparison of algorithmic stability definitions for selecting $\nselected$ of $\napps$ candidates.}
\label{tab:stability_comparison}
\end{table}

\subsection{Individual Fairness}
\label{sec:individual_fairness}

\emph{Individual Fairness} (IF)~\citep{dwork2012fairness} requires that candidates with similar utilities receive similar selection probabilities (formally: $|\selectionprob_i(\reviewmatrix) - \selectionprob_j(\reviewmatrix)| \le \alpha\,|\util_i(\reviewmatrix) - \util_j(\reviewmatrix)|$ for all $i,j$ on every $\reviewmatrix$). IF and our smooth-marginals condition are formally incomparable: smoothness bounds how the \emph{same} candidates' probabilities change across review matrices, whereas IF bounds how \emph{different} candidates' probabilities differ on the same matrix; neither implies the other (Appendix~\ref{app:if_separation}). Remarkably, prior work shows that the \ourlottery (with $\slope = \alpha$) is the exact, instance-optimal solution to the IF-constrained utility-maximization LP~\citep[Theorem 3]{bairaktari_fair_cohort_selection}. Thus, the \ourlottery is not only near-optimal for smoothness across datasets, but also instance-optimal for individual fairness on any fixed review matrix.

\subsection{Differential Privacy}
\label{sec:differential_privacy}

Perhaps the most widely studied notion of algorithmic stability is differential privacy (DP)~\citep{DworkMNS06}. DP ensures that an algorithm's output does not reveal too much about any single individual in the dataset. In our setting, DP constrains how much the \emph{joint} distribution over selected subsets can change when a single review score is perturbed.

\begin{definition}[$\varepsilon$-Differential Privacy]
\label{def:differential_privacy}
A selection rule $\selectionrule: [0,1]^{\napps \times \nreviews} \to \Delta(\ksubsets)$ satisfies $\varepsilon$-differential privacy ($\varepsilon$-DP) if for all review matrices $\reviewmatrix, \reviewmatrix' \in [0,1]^{\napps \times \nreviews}$ differing in at most one entry, and every subset $S \in \ksubsets$,
\[
\Pr[\selectionrule(\reviewmatrix)=S]
\le
e^\varepsilon
\Pr[\selectionrule(\reviewmatrix')=S].
\]
\end{definition}

Our $\smoothness$-smoothness condition is a statement about \emph{marginal} selection probabilities, and does not imply DP. A perfectly smooth selection rule can output uniform marginals $(\nselected/\napps,\ldots,\nselected/\napps)$ on all inputs, but implement those marginals using different joint distributions with disjoint support, which would not satisfy DP.

Conversely, standard DP does not imply our $\ell_{1,1}$-smoothness condition. The issue is that standard DP uses a discrete neighboring relation: any change to one review score is treated as one neighboring change, regardless of whether that score changes by an arbitrarily small amount or by a large amount. Thus, standard DP bounds the size of a change in the marginal probabilities, but this bound is constant and does not scale linearly with the magnitude of the change in review scores. Hence, in general, there need not exist a finite  $\smoothness$ such that the smoothness condition holds for arbitrarily small changes in a single entry of the review matrix $X$, as we show in the following proposition.

\begin{proposition}[Standard DP does not imply smooth marginals]
\label{prop:standard_dp_not_smooth}
For every $\varepsilon>0$, there exists an $\varepsilon$-DP selection rule that is not $\smoothness$-smooth with respect to $\|\cdot\|_{1,1}$ for any finite $\smoothness$.
\end{proposition}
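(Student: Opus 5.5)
We prove the statement with an explicit construction: an $\varepsilon$-DP rule whose joint distribution depends on the reviews only through a discontinuous (threshold) function of a single entry. Since DP only compares neighboring matrices multiplicatively, a jump is harmless for DP provided the two distributions on either side of it are within a factor $e^{\varepsilon}$ of each other pointwise. For smoothness, however, the same jump is fatal, because the $\ell_{1,1}$ distance across it can be made arbitrarily small. We assume $1\le\nselected<\napps$; otherwise all marginals are constant and the question is vacuous.

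\textbf{Construction.} Fix candidates $1$ and $2$ and a set $T\in\ksubsets$ with $1\in T$ and $2\notin T$, and let $T' = (T\setminus\{1\})\cup\{2\}$. Let $\mu$ be the uniform distribution on $\ksubsets$. We mix in a small point mass whose location depends on a threshold of the single entry $\reviewmatrix_{1,1}$. Concretely, for $\lambda\in(0,1)$:
\begin{itemize}
\item if $\reviewmatrix_{1,1} \ge 1/2$, output $S$ with probability $(1-\lambda)\mu(S) + \lambda\,\mathbf{1}[S=T]$;
\item otherwise, output $S$ with probability $(1-\lambda)\mu(S) + \lambda\,\mathbf{1}[S=T']$.
\end{itemize}

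\textbf{DP check.} The rule's output depends only on whether $\reviewmatrix_{1,1}\ge 1/2$. So for any two matrices (neighboring or not) it suffices to compare the two distributions pointwise. Write $N=\binom{\napps}{\nselected}$.
\begin{itemize}
\item For $S\notin\{T,T'\}$ the probabilities agree.
\item For $S=T$ the ratio is at most $\bigl((1-\lambda)/N+\lambda\bigr)\big/\bigl((1-\lambda)/N\bigr) = 1+\lambda N/(1-\lambda)$.
\item The case $S=T'$ is symmetric.
\end{itemize}
Choosing $\lambda$ with $1+\lambda N/(1-\lambda)\le e^{\varepsilon}$, for instance $\lambda = (e^{\varepsilon}-1)/(e^{\varepsilon}-1+N)$, gives $\varepsilon$-DP. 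Since the ratio bound holds for all pairs, it holds in particular for pairs differing in one entry.

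\textbf{Non-smoothness.} Compute the marginals. Under the uniform component every candidate has marginal $\nselected/\napps$. Above the threshold, candidate $1$ gains an extra $\lambda$ relative to below it, and candidate $2$ loses $\lambda$; all other candidates in $T\cap T'$ or outside both sets are unchanged. Hence $\|\selectionprob(\reviewmatrix)-\selectionprob(\reviewmatrix')\|_1 = 2\lambda>0$ for
\begin{itemize}
\item $\reviewmatrix$ with $\reviewmatrix_{1,1}=1/2$, and
\item $\reviewmatrix'$ equal to $\reviewmatrix$ except $\reviewmatrix'_{1,1}=1/2-\eta$,
\end{itemize}
while $\|\reviewmatrix-\reviewmatrix'\|_{1,1}=\eta$. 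If the rule were $\smoothness$-smooth, we would need $2\lambda\le\smoothness\eta$ for every $\eta\in(0,1/2]$. Letting $\eta\to0$ forces $\smoothness=\infty$, which is a contradiction.

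\textbf{Main obstacle.} The only delicate point is making the discontinuity large enough to change the marginals while keeping every pointwise probability ratio bounded by $e^{\varepsilon}$. The uniform mixing component handles this: it keeps all output probabilities bounded away from zero, which is why the small point mass $\lambda$ can move without violating DP. The remaining verification is the routine ratio computation above.
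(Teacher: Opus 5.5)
Your construction is correct and follows essentially the paper's route: the rule depends only on a threshold of $\reviewmatrix_{1,1}$ and moves mass between two $\nselected$-sets that differ by one swapped candidate. This caps the likelihood ratio at $e^{\varepsilon}$, yet the marginals jump by a constant under an arbitrarily small perturbation. The paper drops your uniform background and runs randomized response directly on the two sets, since sets with probability zero on both sides of the threshold cause no DP violation; this gives a jump of $2\tanh(\varepsilon/2)$ independent of $\napps$ and $\nselected$, versus your $2\lambda$, which is of order $2(e^{\varepsilon}-1)/N$, and either suffices for the claim.
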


We prove Proposition~\ref{prop:standard_dp_not_smooth} by counterexample in Appendix~\ref{app:dp_not_smooth}. The mechanism randomizes between two selected sets,
$
S_1=\{1,\ldots,\nselected\}$ and $S_2=\{\nselected+1,\ldots,2\nselected\}$,
by looking at only a single review score, $\reviewmatrix_{1,1}$, thresholding it at $1/2$, and using randomized response on the resulting bit to decide whether to favor $S_1$ or $S_2$. Since the probability ratio is bounded by $e^\varepsilon$, the rule is $\varepsilon$-DP under the standard one-entry replacement definition. But the marginals are discontinuous at the threshold: an arbitrarily small perturbation of $\reviewmatrix_{1,1}$ can change which set is favored, producing a constant-size change in the marginal probabilities. Hence standard DP does not imply smooth marginals.
To obtain a DP-style condition that rules out such discontinuities, the privacy definition must scale with the distance between inputs. We use metric differential privacy, which generalizes DP by replacing the discrete neighboring relation with an arbitrary metric on the input space~\citep{chatzikokolakis2013broadening}.

\begin{definition}[$\varepsilon$-Metric Differential Privacy]
\label{def:metric_differential_privacy}
A selection rule $\selectionrule: [0,1]^{\napps \times \nreviews} \to \Delta(\ksubsets)$ satisfies $\varepsilon$-metric differential privacy with respect to $\|\cdot\|_{1,1}$ if for all review matrices $\reviewmatrix,\reviewmatrix' \in [0,1]^{\napps \times \nreviews}$ and every subset $S \in \ksubsets$,
\[
\Pr[\selectionrule(\reviewmatrix)=S]
\le
\exp\left(
\varepsilon\|\reviewmatrix-\reviewmatrix'\|_{1,1}
\right)
\Pr[\selectionrule(\reviewmatrix')=S].
\]
\end{definition}

Metric DP controls the joint output distribution using the same distance on review matrices as our smoothness definition. This gives a bound on the smoothness of marginals:

\begin{proposition}[Metric DP implies smoothness of marginals]
\label{prop:dp_implications}
If a selection rule $\selectionrule$ satisfies $\varepsilon$-metric differential privacy with respect to $\|\cdot\|_{1,1}$, then its induced marginals $\selectionprob$ are $\smoothness$-smooth for some
\[
\smoothness \le \varepsilon\nselected.
\]
\end{proposition}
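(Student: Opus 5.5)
The plan is to pass from the joint distribution to the marginals through total variation distance, and then use the metric-DP ratio bound to control that distance. For any two review matrices $\reviewmatrix,\reviewmatrix'$, set $d=\|\reviewmatrix-\reviewmatrix'\|_{1,1}$. Write $P=\selectionrule(\reviewmatrix)$ and $Q=\selectionrule(\reviewmatrix')$ for the output distributions over $\ksubsets$. Each marginal is an expectation of an indicator, $\selectionprob_i(\reviewmatrix)=\sum_{S\ni i}P(S)$. Therefore
\[
\|\selectionprob(\reviewmatrix)-\selectionprob(\reviewmatrix')\|_1=\sum_{i}\Bigl|\sum_{S\ni i}(P(S)-Q(S))\Bigr|\le\sum_{S\in\ksubsets}|S|\,|P(S)-Q(S)|=2\nselected\,\mathrm{TV}(P,Q).
\]
The inequality comes from the triangle inequality and swapping the order of summation. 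It uses that every $S$ has size $\nselected$.

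It then remains to bound $\mathrm{TV}(P,Q)$. Metric DP gives $P(S)-Q(S)\le (e^{\varepsilon d}-1)Q(S)$ for every $S$. Summing over the sets where $P>Q$ gives $\mathrm{TV}(P,Q)\le e^{\varepsilon d}-1$, and the symmetric bound also yields $\mathrm{TV}\le 1-e^{-\varepsilon d}$. Plugging in the first bound already gives $\|\selectionprob(\reviewmatrix)-\selectionprob(\reviewmatrix')\|_1\le 2\nselected(e^{\varepsilon d}-1)$. As $d\to0$ this is $2\nselected\varepsilon d+o(d)$, which gives local Lipschitzness with constant $2\varepsilon\nselected$. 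That is off by a factor of $2$ from the statement.

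\textbf{Removing the factor 2 (main obstacle).} I would first get a sharper TV bound. With $A=\{S:P(S)>Q(S)\}$, write $\mathrm{TV}=P(A)-Q(A)$. Metric DP gives $P(A)\le e^{\varepsilon d}Q(A)$, so $\mathrm{TV}\le (e^{\varepsilon d}-1)Q(A)$. Applying the bound in the other direction to the complement gives $Q(A^c)\le e^{\varepsilon d}P(A^c)$, so $\mathrm{TV}\le (1-e^{-\varepsilon d})Q(A^c)$ up to rearrangement. Combining the two, using $Q(A)+Q(A^c)=1$, gives $\mathrm{TV}\le \frac{e^{\varepsilon d}-1}{e^{\varepsilon d}+1}=\tanh(\varepsilon d/2)\le \varepsilon d/2$. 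Hence
\[
\|\selectionprob(\reviewmatrix)-\selectionprob(\reviewmatrix')\|_1\le 2\nselected\cdot\tfrac{\varepsilon d}{2}=\varepsilon\nselected\, d .
\]
This bound holds globally for all $d$, so no path-integration argument is needed.

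If combining the two one-sided inequalities turns out to be delicate, there is a fallback. The map $t\mapsto \selectionprob(\reviewmatrix+t(\reviewmatrix'-\reviewmatrix))$ is Lipschitz along segments with local constant $2\varepsilon\nselected$. A triangle-inequality chaining over a fine partition of the segment then gives global Lipschitzness with the local constant, since the $o(d)$ terms vanish. That route only yields $2\varepsilon\nselected$, so the $\tanh$ bound is the step I would verify most carefully: check the algebra $\mathrm{TV}\le\min\{(e^{x}-1)q,(1-e^{-x})(1-q)\}$ maximized over $q$.
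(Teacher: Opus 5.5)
Your proposal is correct and is essentially the paper's proof. You bound $\|\selectionprob(\reviewmatrix)-\selectionprob(\reviewmatrix')\|_1$ by $2\nselected\, d_{\mathrm{TV}}$ using $|S|=\nselected$, then use the two-sided likelihood-ratio bound to get $d_{\mathrm{TV}}\le\tanh(\varepsilon d/2)\le \varepsilon d/2$; your optimization of $\min\{(e^{x}-1)q,(1-e^{-x})(1-q)\}$, maximized at $q=1/(e^{x}+1)$, correctly derives the $\tanh$ bound that the paper only asserts, so the factor-$2$ detour and the chaining fallback are not needed.
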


This bound is tight as $\varepsilon\|\reviewmatrix-\reviewmatrix'\|_{1,1}\to 0$; see Appendix~\ref{app:metric_dp_smooth}.

\paragraph{Metric DP is not regret-optimal for enforcing smooth marginals.}
Metric DP gives one way to certify $\smoothness$-smooth marginals: by Proposition~\ref{prop:dp_implications}, it suffices to take $\varepsilon \le \smoothness/\nselected$. Since metric DP implies standard DP for one-entry changes, known lower bounds for pure-DP top-$\nselected$ selection imply error $\Omega(\nselected \utillipschitz \log(\napps/\nselected)/\varepsilon)$~\citep{steinke2017interactive,bafna2017price}. Setting $\varepsilon \approx \smoothness/\nselected$ gives regret $\Omega(\nselected^2 \utillipschitz \log(\napps/\nselected)/\smoothness)$. This is larger than the lower bound for $\smoothness$-smooth rules in Theorem~\ref{thm:lower-bound} by an additional factor $\nselected\log(\napps/\nselected)$. Thus, metric DP does not by itself navigate the smoothness--regret tradeoff well.

\subsection{Softmax (Exponential Mechanism)}
\label{sec:softmax}

Section~\ref{sec:differential_privacy} shows that metric differential privacy does not, in itself, yield a good smoothness--utility tradeoff; however, Proposition~\ref{prop:dp_implications} is a worst-case guarantee and does not rule out that specific metric-differentially private mechanisms perform better than the generic smoothness bound in Proposition~\ref{prop:dp_implications}. We therefore study a well-known private selection mechanism: the \emph{Exponential Mechanism}~\citep{mcsherry2007mechanism}.

Given utilities $\util \in \mathbb{R}^{\napps}$ and temperature $\temp>0$, the softmax distribution is \[\softmax_{\temp}(\util)_i = \exp(\util_i/\temp)/\sum_{j}\exp(\util_j/\temp).\]
By the \emph{top-$\nselected$ softmax rule} we mean the procedure that samples $\nselected$ items \emph{without replacement} by repeatedly sampling from the re-normalized softmax distribution over the remaining items. Equivalently, this is the Gumbel-top-$\nselected$ procedure obtained by adding i.i.d.\ Gumbel noise to utilities and selecting the $\nselected$ largest perturbed utilities~\citep{kool2019stochastic}. The exponential mechanism naturally extends to metric differential privacy: if the utility function is $\utillipschitz$-Lipschitz with respect to a given metric, then sampling one item with probability proportional to $\exp(\util_i(\reviewmatrix)/\temp)$ satisfies $(2\utillipschitz/\temp)$-metric DP~\citep{kamalaruban2020not}.

\begin{theorem}[Smoothness and Regret of the Top-$\nselected$ Softmax Rule]
\label{thm:softmax_properties}
Assume $\util$ is $\utillipschitz$-Lipschitz. The top-$\nselected$ softmax rule with temperature $\temp$ induces $\smoothness$-smooth marginal selection probabilities with $\smoothness \le \frac{2\utillipschitz}{e \temp}$, and the worst-case regret satisfies $\max_{\reviewmatrix}\regret(\selectionrule;\reviewmatrix) \le \nselected\,\temp\,\log \napps$.
\end{theorem}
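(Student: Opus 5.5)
Two claims need proof: smoothness of the marginals with constant $2\utillipschitz/(e\temp)$, and regret at most $\nselected\temp\log\napps$. For both I would use the Gumbel-top-$\nselected$ representation: $\selectionprob_i(\reviewmatrix)=\PrOp[\util_i/\temp+G_i \text{ is among the } \nselected \text{ largest of } \util_j/\temp+G_j]$ with $G_j$ i.i.d.\ standard Gumbel. This writes the marginals as a smooth function $q:\R^{\napps}\to[0,1]^{\napps}$ of the scaled utility vector $v=\util/\temp$. Smoothness then reduces, via the chain rule and the $\utillipschitz$-Lipschitz property of $\util$, to showing that the Jacobian $J=\partial q/\partial v$ has $\ell_1\to\ell_1$ operator norm at most $2/e$. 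That operator norm is the maximum column $\ell_1$ norm, $\max_j\sum_i|\partial q_i/\partial v_j|$, and combining it with the factor $\utillipschitz/\temp$ from the chain rule gives $\smoothness\le 2\utillipschitz/(e\temp)$. I would obtain the global statement by integrating $J$ along the segment from $\util(\reviewmatrix)$ to $\util(\reviewmatrix')$. Since $\util$ need not be differentiable, the cleaner route is to bound the Lipschitz constant of $q$ in $\ell_1$ directly and then compose.

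The column bound comes from structural facts about $J$. Increasing $v_j$ can only raise $q_j$ and lower every $q_i$ with $i\ne j$, a monotonicity that the Gumbel coupling makes evident. Since $\sum_i q_i=\nselected$ is constant, $\sum_i\partial q_i/\partial v_j=0$, so the column $\ell_1$ norm equals $2\,\partial q_j/\partial v_j$. It therefore suffices to show $\partial q_j/\partial v_j\le 1/e$. Condition on the other noises and let $T$ be the $\nselected$-th largest of $\{v_i+G_i\}_{i\ne j}$. Then $q_j=\mathbb{E}[1-F(T-v_j)]$, where $F(x)=\exp(-e^{-x})$ is the Gumbel CDF, so $\partial q_j/\partial v_j=\mathbb{E}[f(T-v_j)]$ with density $f(x)=e^{-x}\exp(-e^{-x})$. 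This density is maximized at $x=0$, where it equals $1/e$, which gives the bound.

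This is the main step, and I expect it to be the main obstacle. The conditioning argument has to be made rigorous: $T$ is independent of $G_j$, and differentiation under the expectation must be justified by bounded convergence, since $f\le1/e$. The monotonicity of the off-diagonal entries must also be verified; I would get it from the fact that raising $v_j$ can only displace other items from the top $\nselected$ under the coupling. A sanity check is that for $\nselected=1$ this recovers the known softmax bound $\partial p_j/\partial v_j=p_j(1-p_j)\le1/4$, which is consistent with $1/4\le 1/e$.

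For regret, I would again use the Gumbel representation. Let $S^\star$ be the true top-$\nselected$ set and $S$ the selected set. Since $S$ maximizes $\sum_{i\in S}(v_i+G_i)$, we have $\sum_{S^\star}v_i-\sum_S v_i\le\sum_S G_i-\sum_{S^\star}G_i\le \max_{|A|=\nselected}\sum_A G_i-\sum_{S^\star}G_i$. Taking expectations, $\mathbb{E}[\sum_{S^\star}G_i]=\nselected\gamma$, where $\gamma$ is the Euler--Mascheroni constant. The term $\mathbb{E}[\max_{|A|=\nselected}\sum_A G_i]$ is at most $\nselected\,\mathbb{E}[\max_i G_i]=\nselected(\gamma+\log\napps)$. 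The regret in $v$-units is then at most $\nselected\log\napps$, and multiplying by $\temp$ gives $\nselected\temp\log\napps$. An alternative route is to apply the standard exponential-mechanism bound $\temp\log\napps$ to each of the $\nselected$ sequential draws, but the Gumbel argument is cleaner.
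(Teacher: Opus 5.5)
Your proposal is correct. The smoothness half follows the paper's proof step for step: integrate the Jacobian along the segment in utility space, use the shared-Gumbel coupling to get the sign pattern of each column, and use the budget identity to reduce the column $\ell_1$ norm to $2\,\partial_j\selectionprob_j$. The diagonal term is then controlled by the threshold representation and the Gumbel density bound $1/e$. Your rescaling to $v=\util/\temp$ with standard Gumbel noise is equivalent to the paper's $\mathrm{Gumbel}(0,\temp)$ noise.

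For regret you take a genuinely different route. The paper works with sequential sampling. At step $t$, the log-sum-exp variational identity bounds the per-step loss relative to $M_t=\max_{i\in R_t}u_i$ by $\temp\log|R_t|$. Summing over steps implicitly uses $\sum_t M_t\ge\mathrm{OPT}$, which holds because only $t-1$ items have been removed before step $t$, so $M_t$ is at least the $t$-th largest utility. You instead use a perturbed-optimizer argument: optimality of $S$ for the perturbed scores, $\mathbb{E}[G_i]=\gamma$ on the fixed set $S^\star$, and $\mathbb{E}[\max_i G_i]=\gamma+\log\napps$. Both arguments give $\nselected\temp\log\napps$.

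Your version has three advantages. It is one-shot, it stays inside the Gumbel representation already used for smoothness, and it avoids the step relating per-round maxima to $\mathrm{OPT}$. It also sharpens naturally: replacing $\nselected\max_i G_i$ by the sum of the top $\nselected$ Gumbel order statistics, which is exactly $\max_{|A|=\nselected}\sum_{i\in A}G_i$, correctly gives zero regret when $\nselected=\napps$. The paper's version instead connects directly to the textbook exponential-mechanism utility guarantee. It also yields the intermediate bound $\temp\sum_{t=1}^{\nselected}\log(\napps-t+1)$ before the crude simplification to $\nselected\temp\log\napps$.
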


Theorem~\ref{thm:softmax_properties} shows that top-$\nselected$ softmax, like the \ourlottery, can be tuned to satisfy any desired smoothness level. In particular, setting $\temp = \frac{2\utillipschitz}{e\smoothness}$
guarantees $\smoothness$-smooth marginals and gives regret at most $
\frac{2\utillipschitz \nselected \log \napps}{e\smoothness}$. This smoothness guarantee is stronger than what follows from the generic differential privacy argument above, since it does not lose an additional factor of $\nselected$. However, it is still worse than the \ourlottery's regret bound by a logarithmic factor in $\napps$: the \ourlottery scales as $O(\nselected/\smoothness)$, while top-$\nselected$ softmax scales as $O(\nselected\log(\napps)/\smoothness)$. Thus, the \ourlottery provides a better worst-case smoothness--regret tradeoff. This comparison is based on our theoretical upper bound on worst-case regret of the softmax. A sharper smoothness analysis for softmax could in principle narrow the gap, but our empirical results in Section~\ref{sec:exp_smoothness_tightness} find instances where the bound is tight to within $5\%$.

As an additional practical advantage, the \ourlottery has \emph{sparse} support---deterministically funding the strongest proposals and rejecting the weakest, restricting randomness to borderline cases---whereas the softmax rule has full support and assigns strictly positive inclusion probability to every candidate. 

The regret bound follows by extending standard guarantees for the Exponential Mechanism in the differential privacy literature~\citep{dwork2014algorithmic}. However, the smoothness bound requires a novel analysis of the marginal inclusion probabilities induced by the top-$\nselected$ softmax rule, given in Appendix~\ref{app:proof_softmax}.

\section{Existing partial lottery designs}
\label{sec:existing_lotteries}

The most common lottery design used in practice partitions candidates into three tiers based on a score or rank statistic: auto-accept above a high threshold, auto-reject below a low threshold, and uniform randomization among candidates in the middle tier~\citep{liu2020acceptability, heyard2022rethinking, vwf2025lottery, britishacademy2025randomisation}. As discussed in the Introduction and illustrated in Figure~\ref{fig:smooth_illustration}, these designs share a structural source of instability. Since tier membership is determined by sharp thresholds, an arbitrarily small perturbation to a single review can move a candidate across a tier boundary, changing their selection probability by a constant amount. They are therefore inherently non-smooth.

Recent lottery designs, including the Swiss NSF procedure~\citep{adam2019science, heyard2022rethinking} and MERIT~\citep{goldberg2025principled}, also take \emph{interval estimates of candidate quality} as input. For example, an interval might be formed from the minimum and maximum review scores assigned to a proposal. These intervals capture uncertainty about the relative quality of proposals: if two candidates' intervals overlap, the funder treats their ordering as uncertain; if one candidate's interval lies strictly above another's, the funder treats the higher-scoring candidate as clearly dominating the lower-scoring one. This idea is formalized through an \emph{ex post validity} constraint~\citep{goldberg2025principled}: if candidate $i$'s lower bound exceeds candidate $j$'s upper bound, then $j$ should not be selected unless $i$ is selected as well. This guarantee limits randomization to candidates who are plausibly comparable, preventing outcomes in which a clearly dominated candidate is selected while the candidate who dominates them is not.

Can we combine ex post validity with smoothness? In general, the two desiderata may conflict. Suppose all candidates have near-zero-width, non-overlapping intervals. Then ex post validity forces deterministic rank-based selection: the top $\nselected$ candidates must be selected, and the remaining candidates must be rejected. But an arbitrarily small perturbation could move the $\nselected$-th candidate below the $(\nselected+1)$-st candidate, causing their selection probability to jump from $1$ to $0$. This directly conflicts with smoothness, which requires selection probabilities to change gradually under small review perturbations.

Appendix~\ref{app:ex_post_valid_sampling} gives two ways to reconcile the \ourlottery with ex post validity, depending on which guarantee the funder wants to enforce exactly. If the funder prioritizes exact smoothness, then it can relax the dominance relation by using intervals wide enough relative to the smoothness scale. Under this condition, whenever one candidate clearly dominates another, the \ourlottery already assigns probability $1$ to the dominant candidate or probability $0$ to the dominated candidate, so ex post validity holds automatically. If the funder instead prioritizes exact ex post validity, then it can project the \ourlottery marginals onto the polytope of ex post valid marginal probabilities and sample only from valid sets. This preserves interval dominance in every realized outcome, but may weaken the global smoothness guarantee. Thus, smoothness and ex post validity are not diametrically opposed design goals, but combining them requires making explicit which guarantee is enforced exactly and which is relaxed.

\section{Empirical Comparison}
\label{sec:experiments}

We complement our theory with experiments on real peer review datasets and synthetic data.

\paragraph{Experimental setup.}
We evaluate the \ourlottery and top-$\nselected$ softmax at acceptance rates $\nselected/\napps \in \{10\%,33\%,50\%\}$, and compare against existing partial lottery designs. We use ICLR 2025~\citep{openreview_iclr2025} ($\napps=3{,}710$, $\nreviews_{\min}=3$, scores $1$--$10$), NeurIPS 2024~\citep{openreview_neurips2024} ($\napps=4{,}034$, $\nreviews_{\min}=3$, scores $1$--$10$), Swiss NSF~\citep{heyard2022rethinking} ($\napps=353$, $\nreviews_{\min}=5$, scores $1$--$6$), and synthetic Beta reviews ($\napps=200$, $\nreviews=5$), drawn i.i.d.\ from $\mathrm{Beta}(\alpha,\alpha)$ on $[0,1]$ and discretized to $10$ levels. For ICLR and NeurIPS, we restrict to accepted papers and simulate allocating oral presentations, a randomized-selection setting recently used at multiple computer science conferences~\citep{usenix2025policy,sigmod2025cfp}. Since an allocation rate of roughly $10\%$ matches historical oral-presentation rates at NeurIPS and ICLR, we present $\nselected/\napps=10\%$ results in the main text and defer $\nselected/\napps\in\{33\%,50\%\}$ results to Appendix~\ref{app:additional_experiments}.

\subsection{Smoothness of Existing Partial Lottery Designs}
\label{sec:exp_baselines}

We first test whether existing partial lottery designs are smooth in practice. We evaluate two interval-based mechanisms, MERIT~\citep{goldberg2025principled} and the Swiss NSF procedure~\citep{heyard2022rethinking}. For each mechanism and dataset, we search over all single-review one-tick perturbations and identify the perturbation that maximizes the $\ell_1$ change in marginal selection probabilities. We report the resulting empirical local smoothness, defined as $\|\Delta \selectionprob\|_1 / \|\Delta \reviewmatrix\|_{1,1}$.

Both mechanisms take point estimates and intervals of utility as input. For all datasets, we use the mean review score as the utility and construct intervals using ``leave-one-out intervals''~\citep{goldberg2025principled}, computing the range of possible mean scores obtained by leaving out one reviewer at a time.

\begin{figure}[t]
    \centering

    \begin{subfigure}[t]{0.75\linewidth}
        \centering
        \includegraphics[width=\linewidth]{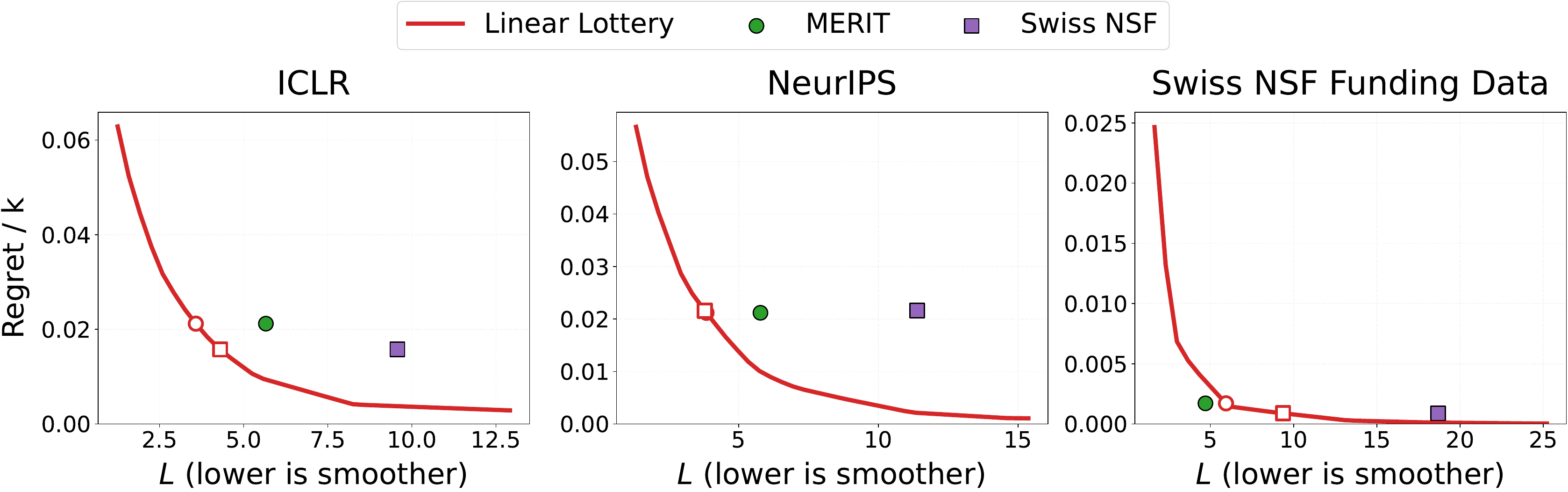}
        \caption{Acceptance rate $\nselected/\napps = 10\%$.}
        \label{fig:matched_regret_10}
    \end{subfigure}

    \vspace{0.75em}

    \begin{subfigure}[t]{0.75\linewidth}
        \centering
        \includegraphics[width=\linewidth]{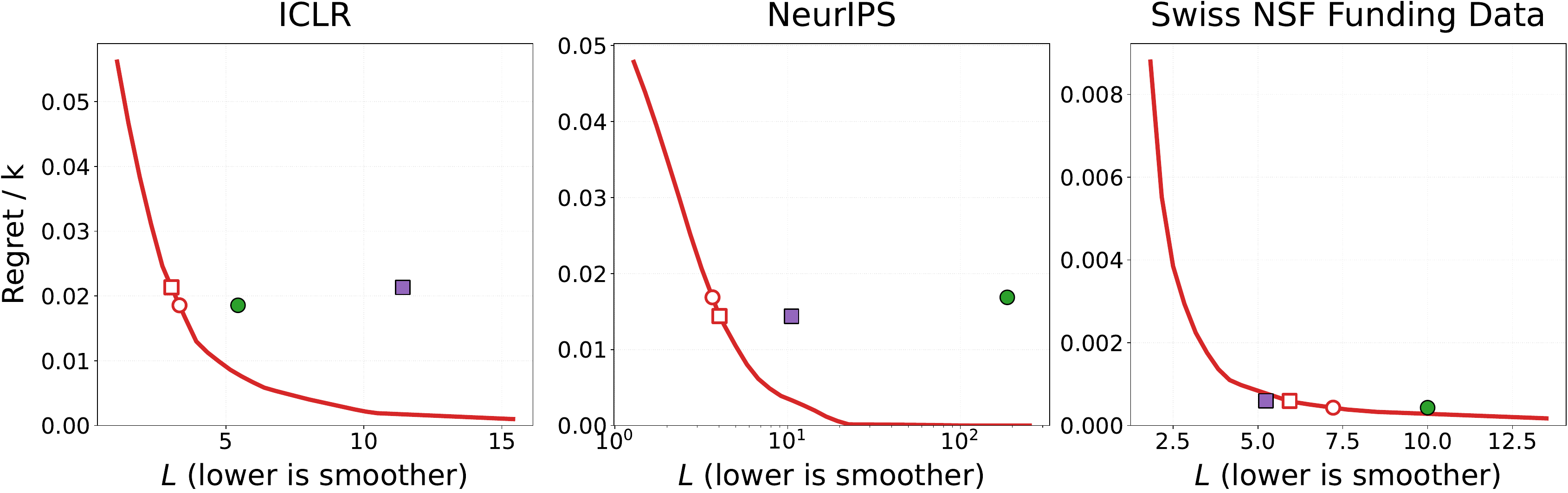}
        \caption{Acceptance rate $\nselected/\napps = 33\%$.}
        \label{fig:matched_regret_33}
    \end{subfigure}

    \caption{Regret--smoothness tradeoff for existing partial lottery mechanisms. Down and to the left indicates a better tradeoff. Points show MERIT and Swiss NSF at their empirical local smoothness under the worst one-review perturbation. Hollow red markers show worst-case smoothness guarantee of the \ourlottery at the same regret level.}
    \label{fig:baseline_local_sensitivity_existing}
\end{figure}

Figure~\ref{fig:baseline_local_sensitivity_existing} compares existing partial lottery designs with the \ourlottery. Existing mechanisms can be highly unstable under small perturbations. For example, on both NeurIPS and ICLR, a single one-point review change can change a paper's selection probability by more than $0.3$ under either MERIT or the Swiss NSF mechanism; full results appear in Appendix~\ref{app:additional_experiments}. By contrast, the \ourlottery achieves much better smoothness at the same level of regret.

\subsection{Regret--Smoothness Tradeoff}
\label{sec:exp_regret}

We next measure regret as a function of the smoothness parameter $\smoothness$. For each dataset and acceptance rate, we compute the regret of the \ourlottery and top-$\nselected$ softmax over a grid of target smoothness values. The \ourlottery's regret is computed exactly, while softmax regret is estimated using Monte Carlo sampling with $10{,}000$ samples per estimate.

\begin{figure}[tb]
    \centering
    \includegraphics[width=0.95\linewidth]{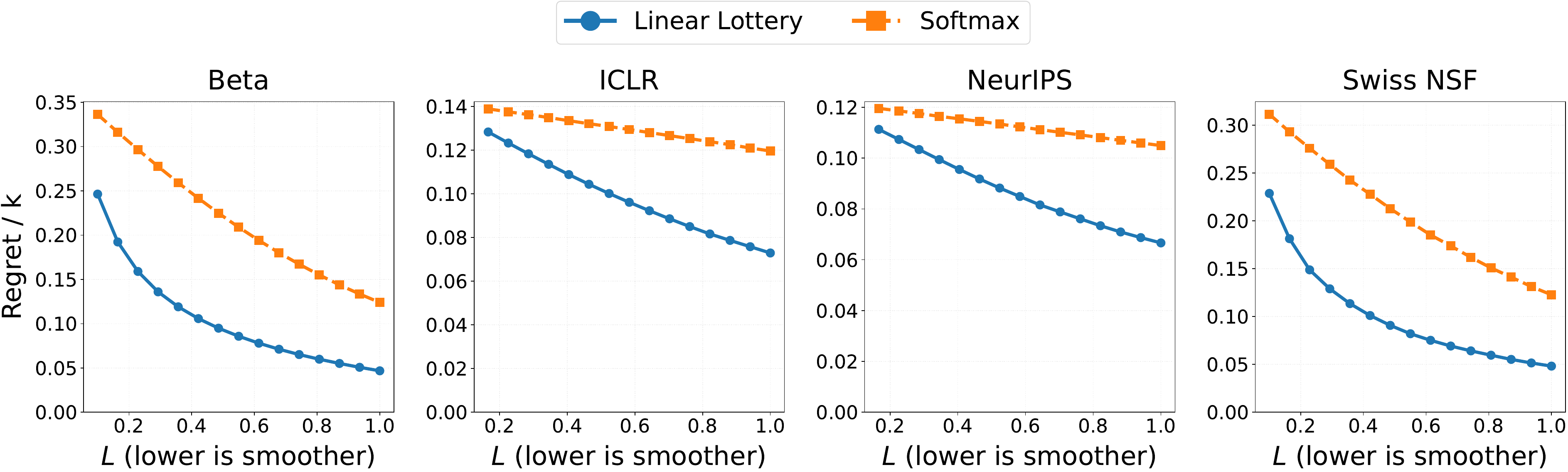}
    \caption{Regret vs.\ smoothness $\smoothness$ at acceptance rate $10\%$. Down and to the left indicates a better tradeoff. The \ourlottery achieves lower regret than softmax at every smoothness level across all datasets.}
    \label{fig:regret_vs_L}
\end{figure}

Figure~\ref{fig:regret_vs_L} shows the regret--smoothness tradeoff at acceptance rate $\nselected/\napps = 10\%$. We report regret normalized by the budget $\nselected$; under this normalization, the \ourlottery bound depends on the acceptance rate only through the factor $(1-\nselected/\napps)$, so the curves have similar shape at other acceptance rates. Appendix~\ref{app:additional_experiments} shows the same qualitative pattern for $\nselected/\napps \in \{33\%,50\%\}$. As predicted by theory, the \ourlottery consistently achieves lower normalized regret than top-$\nselected$ softmax across datasets and smoothness levels.

\subsection{Tightness of Smoothness Guarantees}
\label{sec:exp_smoothness_tightness}

Finally, we test whether our smoothness upper bounds are close to tight. For both the \ourlottery and top-$\nselected$ softmax, we construct near-worst-case utility profiles with $\util_i = 1$ for $i < \nselected$, $\util_\nselected = B \in [0,1]$, and $\util_i = 0$ for $i > \nselected$. We then perturb $\util_\nselected$ to $\util_\nselected \pm \eps$ and grid-search over $B$, $\eps$, and the perturbation direction to maximize the empirical ratio $\|\Delta \selectionprob\|_1 / \|\Delta \reviewmatrix\|_{1,1}$.

\begin{figure}[t]
    \centering
    \includegraphics[width=0.6\linewidth]{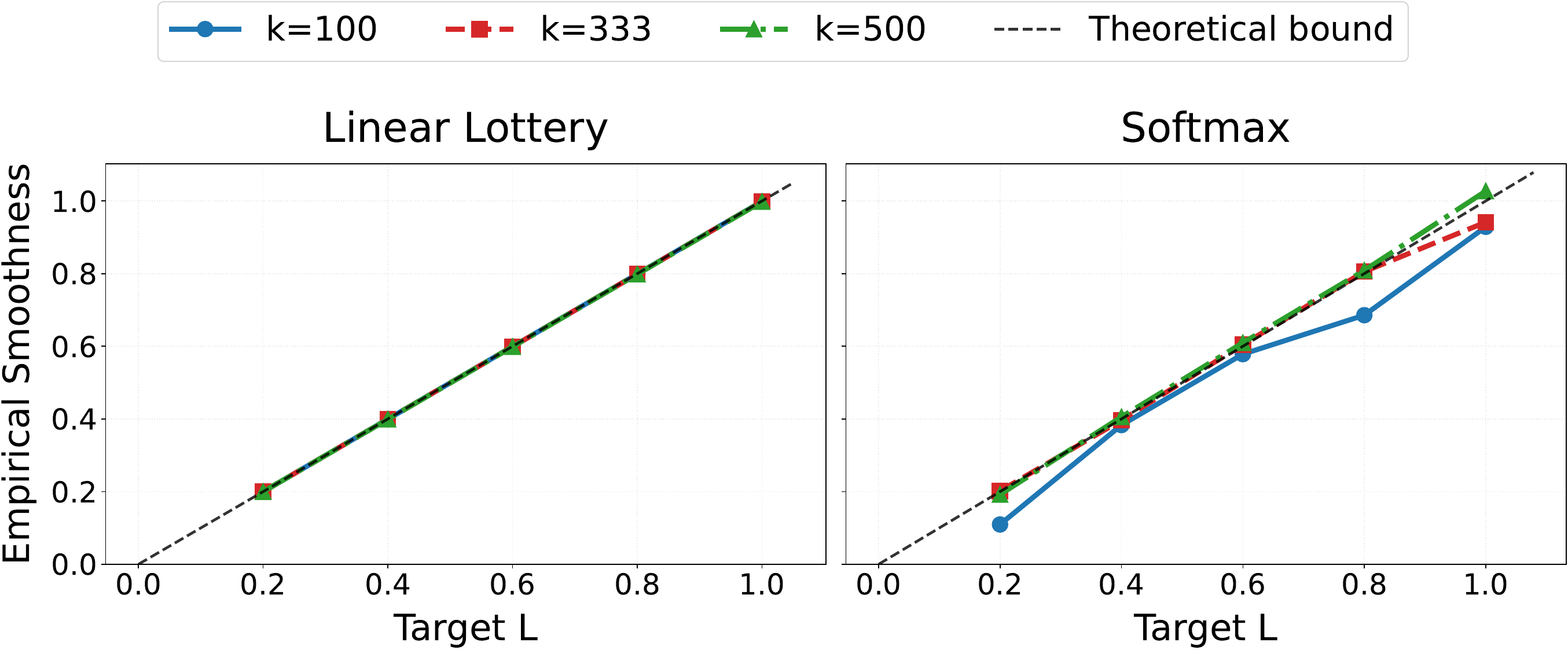}
    \caption{Worst-case empirical smoothness vs.\ target global smoothness $\smoothness$ for $\napps = 1000$ and $\nselected \in \{100,333,500\}$. The dashed line shows the theoretical upper bound. The bounds from Theorems~\ref{thm:linear_smoothness} and~\ref{thm:softmax_properties} are essentially tight for both mechanisms.}
    \label{fig:global_smoothness_ratio}
\end{figure}

Figure~\ref{fig:global_smoothness_ratio} compares the worst-case empirical smoothness to the target smoothness $\smoothness$. For the \ourlottery, the empirical lower bound is within $99\%$ of the theoretical upper bound at $\napps=1000$, confirming that Theorem~\ref{thm:linear_smoothness} is essentially tight. For top-$\nselected$ softmax, the empirical worst-case coefficient is within $95\%$ of the bound from Theorem~\ref{thm:softmax_properties}, showing that our softmax analysis is also close to tight.

\section{Discussion}
\label{sec:discussion}

We propose smoothness as a design principle for partial lotteries, requiring that small changes in review scores induce only small changes in selection probabilities. To satisfy this property, we propose the \ourlottery, analyze its smoothness and regret, and show that it is near-optimal for the resulting tradeoff up to a factor of $1-\frac{\nselected}{\napps}$. We compare smooth selection to Individual Fairness, Differential Privacy, and top-$\nselected$ softmax sampling, and show empirically on real peer review datasets that the \ourlottery achieves lower regret than softmax at matched smoothness, while existing partial lottery designs can be highly unstable under small review perturbations. Our results demonstrate that smoothness is a useful and tractable design principle for randomized selection in evaluation settings, and that the \ourlottery provides a simple, effective, and interpretable mechanism for enforcing it.

\paragraph{Limitations.}
First, our analysis assumes an exact, globally Lipschitz utility function, whereas practical utility estimates may be noisy, misspecified, or only locally smooth. Second, our regret bounds leave a factor of $(1-\nselected/\napps)$ gap between the \ourlottery upper bound and our lower bound for any smooth rule. Although this is a small constant factor in many selection settings, closing this gap would sharpen the optimality guarantees for smooth lotteries. Third, we do not study how smooth lotteries affect reviewer behavior or institutional decision-making in practice. An important direction for future work is to evaluate whether smoother mechanisms reduce fruitless deliberation near decision thresholds, and whether reviewers and applicants perceive them as more legitimate than threshold-based partial lotteries.

\paragraph{Future work.}

A natural next step is to study smooth lotteries in deployed evaluation settings, including whether they reduce deliberation near decision thresholds, improve perceived legitimacy among reviewers and applicants, or change reviewer behavior. Another direction is to extend the theory beyond exact, globally Lipschitz utilities to noisy, locally smooth, or learned utility functions. Ideas from differential privacy, such as calibrating noise to local or smooth sensitivity rather than worst-case global sensitivity, may be useful for designing instance-adaptive lotteries. 

\section*{Acknowledgments}

This work was supported in parts by NSF 1942124, 2200410, and DGE2140739 as well as the Gates Foundation.

\bibliographystyle{plainnat}
\bibliography{bibtex}

~\\
\appendix
\noindent{\bf \Large Appendices}

\section{Proofs}
\label{app:proofs}

\subsection{Smoothness of \ourlottery (Theorem~\ref{thm:linear_smoothness})}
\label{app:proof_linear_smoothness}

\begin{proof}
Let $u = \util(\reviewmatrix)$ and $u' = \util(\reviewmatrix')$. Let $\intercept$ and $\intercept'$ denote the intercepts for the \ourlottery on $\reviewmatrix$ and $\reviewmatrix'$, respectively. For each coordinate $i$, the selection probability is $\selectionprob_i = \clip_{[0,1]}(\slope u_i + \intercept)$. By the triangle inequality:
\[
|\selectionprob_i - \selectionprob'_i| \le \underbrace{|\clip_{[0,1]}(\slope u_i + \intercept) - \clip_{[0,1]}(\slope u'_i + \intercept)|}_{(A_i)} + \underbrace{|\clip_{[0,1]}(\slope u'_i + \intercept) - \clip_{[0,1]}(\slope u'_i + \intercept')|}_{(B_i)}.
\]
Since $\clip_{[0,1]}(\cdot)$ is $1$-Lipschitz, $A_i \le \slope|u_i - u'_i|$ and summing over $i$ yields $\sum_i A_i \le \slope \|u - u'\|_1$. For the second term, define $h_i(\beta) = \clip_{[0,1]}(\slope u'_i + \beta)$. Since $h_i(\beta)$ is non-decreasing in $\beta$ for all $i$, the differences $h_i(\intercept) - h_i(\intercept')$ share the same sign, so $\sum_i B_i = |\sum_i h_i(\intercept) - \sum_i h_i(\intercept')|$. Using the budget constraint $\sum_i \selectionprob_i = \sum_i \selectionprob'_i = \nselected$, we have $\sum_i h_i(\intercept') = \nselected = \sum_i \clip_{[0,1]}(\slope u_i + \intercept)$. Substituting:
\[
\sum_{i} B_i = \left| \sum_{i} \clip_{[0,1]}(\slope u'_i + \intercept) - \sum_{i} \clip_{[0,1]}(\slope u_i + \intercept) \right|.
\]
By the triangle inequality, this is bounded by $\sum_i A_i \leq \slope \|u - u'\|_1$. Hence, $\|\selectionprob - \selectionprob'\|_1 \le 2\slope \|u - u'\|_1$. Since $\util$ is $\utillipschitz$-Lipschitz and $\slope = \frac{\smoothness}{2\utillipschitz}$,
$\|\selectionprob - \selectionprob'\|_1 \le \smoothness \|\reviewmatrix - \reviewmatrix'\|_{1,1}.$

\paragraph{Tightness.} Consider $\util(\reviewmatrix) = (0,\dots,0)$ and $\util(\reviewmatrix') = (\eps, 0,\dots,0)$ for small $\eps>0$. The \ourlottery gives $\selectionprob(\reviewmatrix) = (\nselected/\napps,\dots,\nselected/\napps)$, and on perturbed utilities yields $\selectionprob_1(\reviewmatrix') = \nselected/\napps + \eps\slope\frac{\napps-1}{\napps}$ and $\selectionprob_i(\reviewmatrix') = \nselected/\napps - \eps\slope/\napps$ for $i>1$. Hence $\|\selectionprob(\reviewmatrix)-\selectionprob(\reviewmatrix')\|_1 = 2\slope\eps\frac{\napps-1}{\napps}$, demonstrating a realized $\smoothness\frac{\napps-1}{\napps}$ Lipschitz constant.
\end{proof}

\subsection{Regret of \ourlottery (Theorem~\ref{thm:linear_regret})}
\label{app:proof_linear_regret}

\begin{proof}
Let $u = \util(\reviewmatrix)$ and let $\selectionprob$ be the probability vector selected by the \ourlottery. Viewing the \ourlottery as Euclidean projection into $\mathcal{C}_{\napps, \nselected}$ (Proposition~\ref{prop:projection}), by projection optimality $(\selectionprob - \slope u)^\top (q - \selectionprob) \ge 0$ for all $q \in \mathcal{C}_{\napps, \nselected}$. Let $q^* \in \mathcal{C}_{\napps, \nselected}$ be the indicator vector for the top-$\nselected$ items. Substituting $q = q^*$ and rearranging yields $\slope u^\top (q^* - \selectionprob) \le \selectionprob^\top (q^* - \selectionprob)$. The left-hand side is exactly $\slope \cdot \regret(\selectionrule; \reviewmatrix)$. Thus
\begin{align}
\label{eqn:regret_obj}
\selectionprob^\top (q^* - \selectionprob) &= \sum_{i \in T} \selectionprob_i (1 - \selectionprob_i) - \sum_{i \notin T} \selectionprob_i^2 = \sum_{i \in T} \selectionprob_i - \sum_{i=1}^\napps \selectionprob_i^2,
\end{align}
where $T$ is the set of top-$\nselected$ candidates by utility. Maximizing \eqref{eqn:regret_obj} over $\selectionprob \in \mathcal{C}_{\napps, \nselected}$ is concave quadratic; by KKT conditions the optimal $\selectionprob_i = \frac{1}{2}(1 + \nselected/\napps)$ for $i \in T$ and $\selectionprob_i = \nselected/(2\napps)$ otherwise, giving $\selectionprob^\top (q^* - \selectionprob) \le \nselected(1 - \nselected/\napps)/4$. Therefore $\regret(\selectionrule; \reviewmatrix) \le \frac{\nselected(1 - \nselected/\napps)}{4 \slope}$ and taking $\slope = \frac{\smoothness}{2 \utillipschitz}$ completes the proof.
\end{proof}

\subsection{Proof of Regret Lower Bound (Theorem~\ref{thm:lower-bound})}
\label{app:proof_regret_lower_bound}

\begin{proof}
Let $\reviewmatrix^0$ be the all-zero matrix and $p^0=\selectionprob(\reviewmatrix^0)$. Let $T$ be the $\nselected$ candidates with smallest probabilities in $p^0$. Since $\sum_i p_i^0=\nselected$, we have $\sum_{i \in T} p^0_i \leq \nselected^2/\napps$. Fix $\delta\in[0,1]$ and define $\reviewmatrix^{(T)}$ by setting $\reviewmatrix^{(T)}_{i,j}=\delta$ for $i\in T$, $j\in[\nreviews]$, and $0$ otherwise; then $\|\reviewmatrix^{(T)}-\reviewmatrix^0\|_{1,1}=\nselected\nreviews\delta$. Let $p=\selectionprob(\reviewmatrix^{(T)})$. By smoothness, $\|p-p^0\|_1 \le \smoothness\nselected\nreviews\delta$. Since $\sum_i p_i=\sum_i p_i^0=\nselected$, $\sum_{i\in T}(p_i-p_i^0) \leq \tfrac{1}{2}\|p-p^0\|_1$, so $\sum_{i\in T}p_i \le \nselected^2/\napps + \smoothness\nselected\nreviews\delta/2$. On $\reviewmatrix^{(T)}$, $\util_i=\delta$ for $i\in T$ and $0$ otherwise, so $\text{OPT}=\nselected\delta$ and the rule attains utility $\delta\sum_{i\in T}p_i$. Thus
\[
\regret(\selectionrule;\reviewmatrix^{(T)}) \ge \nselected\delta\!\left(1-\tfrac{\nselected}{\napps}-\tfrac{\smoothness\nreviews\delta}{2}\right).
\]
Maximizing over $\delta\in[0,1]$ yields the claimed bound: $\delta^*=(1-\nselected/\napps)/(\smoothness\nreviews)$ if $\smoothness \ge \tfrac{1}{\nreviews}(1-\tfrac{\nselected}{\napps})$, else $\delta^*=1$.
\end{proof}

\subsection{Proof of Standard DP Does Not Imply Smooth Marginals (Section~\ref{sec:differential_privacy})}
\label{app:dp_not_smooth}

\begin{proposition}
\label{prop:dp_not_smooth}
For every $\varepsilon>0$, there exists an $\varepsilon$-DP selection rule whose induced marginal probabilities are not $\smoothness$-smooth for any finite $\smoothness$.
\end{proposition}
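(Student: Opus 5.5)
The proof is by explicit construction, following the sketch given after Proposition~\ref{prop:standard_dp_not_smooth}. Assume $\napps \ge 2\nselected$ so that the disjoint sets $S_1=\{1,\ldots,\nselected\}$ and $S_2=\{\nselected+1,\ldots,2\nselected\}$ both lie in $\ksubsets$. Define the bit $b(\reviewmatrix)=\mathbf{1}[\reviewmatrix_{1,1}\ge 1/2]$ and set
\[
\Pr[\selectionrule(\reviewmatrix)=S_1]=\frac{e^{\varepsilon}}{1+e^{\varepsilon}},\quad \Pr[\selectionrule(\reviewmatrix)=S_2]=\frac{1}{1+e^{\varepsilon}}
\]
if $b(\reviewmatrix)=1$, with the roles of $S_1$ and $S_2$ swapped if $b(\reviewmatrix)=0$. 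No other set receives positive probability. This is randomized response on the single bit $b$.

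\textbf{Privacy.} The DP condition can be checked set by set. For any $\reviewmatrix,\reviewmatrix'$, including ones differing in one entry, both rules put zero mass on every $S\notin\{S_1,S_2\}$, so the inequality is trivial there. For $S\in\{S_1,S_2\}$, the two probabilities are each either $e^{\varepsilon}/(1+e^{\varepsilon})$ or $1/(1+e^{\varepsilon})$. Their ratio is therefore at most $e^{\varepsilon}$, which gives $\varepsilon$-DP under Definition~\ref{def:differential_privacy}.

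\textbf{Marginals and failure of smoothness.} If $b=1$, every $i\in S_1$ has marginal $e^{\varepsilon}/(1+e^{\varepsilon})$ and every $i\in S_2$ has marginal $1/(1+e^{\varepsilon})$. If $b=0$ these values swap, and all other candidates have marginal $0$. Now take $\reviewmatrix$ with $\reviewmatrix_{1,1}=1/2$, and let $\reviewmatrix'$ equal $\reviewmatrix$ except $\reviewmatrix'_{1,1}=1/2-\eta$ for small $\eta>0$. Then $\|\reviewmatrix-\reviewmatrix'\|_{1,1}=\eta$, while
\[
\|\selectionprob(\reviewmatrix)-\selectionprob(\reviewmatrix')\|_1 = 2\nselected\,\frac{e^{\varepsilon}-1}{e^{\varepsilon}+1}=:c>0 .
\]
The constant $c$ is positive because $\varepsilon>0$. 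If the rule were $\smoothness$-smooth for some finite $\smoothness$, we would need $c\le \smoothness\eta$ for every $\eta>0$. Letting $\eta\to 0$ gives a contradiction.

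There is no real obstacle here; the points needing care are bookkeeping. First, the construction requires $\napps\ge 2\nselected$. If that fails, I would instead use two distinct $\nselected$-subsets that differ in a single element, for example $S_1=\{1,\ldots,\nselected\}$ and $S_2=\{1,\ldots,\nselected-1,\nselected+1\}$, which needs only $\nselected<\napps$. The marginal jump is then $2(e^{\varepsilon}-1)/(e^{\varepsilon}+1)$, still a positive constant. Second, the DP check must be stated for every $S\in\ksubsets$, including the zero-probability sets, where it holds trivially.
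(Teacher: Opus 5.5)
Your proposal is correct and uses essentially the same construction as the paper: randomized response with $a=e^{\varepsilon}/(1+e^{\varepsilon})$ on the thresholded bit $\mathbf{1}[\reviewmatrix_{1,1}\ge 1/2]$, choosing between two fixed size-$\nselected$ sets, so that an arbitrarily small perturbation of $\reviewmatrix_{1,1}$ produces a constant marginal jump. The only difference is that the paper's appendix proof goes directly with the overlapping sets $\{1,\ldots,\nselected\}$ and $\{2,\ldots,\nselected+1\}$ (essentially your fallback), which needs only $1\le\nselected<\napps$ and gives a jump of $2\tanh(\varepsilon/2)$. Your primary disjoint-set version gives the larger jump $2\nselected\tanh(\varepsilon/2)$ but must assume $\napps\ge 2\nselected$.
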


\begin{proof}
Fix any $1 \le \nselected < \napps$. Let
\[
S_1=\{1,\ldots,\nselected\},
\qquad
S_2=\{2,\ldots,\nselected+1\}.
\]
Thus $S_1$ and $S_2$ are both feasible selected sets of size $\nselected$, and they differ only by swapping candidate $1$ for candidate $\nselected+1$. Let the selection rule depend only on the review entry $\reviewmatrix_{1,1}$. Define
\[
a=\frac{e^\varepsilon}{1+e^\varepsilon}.
\]
If $\reviewmatrix_{1,1}<1/2$, the rule selects $S_1$ with probability $a$ and $S_2$ with probability $1-a$. If $\reviewmatrix_{1,1}\ge 1/2$, the rule swaps these probabilities, selecting $S_1$ with probability $1-a$ and $S_2$ with probability $a$.

This rule is $\varepsilon$-DP under the standard neighboring relation, as it is a version of randomized response. However, the induced marginal probabilities are discontinuous. For any $\delta>0$, let $\reviewmatrix,\reviewmatrix'$ satisfy
\[
\reviewmatrix_{1,1}=1/2-\delta,
\qquad
\reviewmatrix'_{1,1}=1/2+\delta,
\]
with all other entries equal. Then
\[
\|\reviewmatrix-\reviewmatrix'\|_{1,1}=2\delta.
\]
The only candidates whose marginal probabilities change are candidates $1$ and $\nselected+1$. Therefore,
\[
\begin{aligned}
\|\selectionprob(\reviewmatrix)-\selectionprob(\reviewmatrix')\|_1 = 
2\tanh(\varepsilon/2).
\end{aligned}
\]
The marginal change is constant in $\delta$, while the review distance tends to zero as $\delta\to 0$. Hence no finite $\smoothness$ can satisfy
\[
\|\selectionprob(\reviewmatrix)-\selectionprob(\reviewmatrix')\|_1
\le
\smoothness\|\reviewmatrix-\reviewmatrix'\|_{1,1}
\]
for all $\reviewmatrix,\reviewmatrix'$. Thus standard DP does not imply smooth marginals.
\end{proof}

\subsection{Proof of Metric DP Implies Smooth Marginals (Proposition~\ref{prop:dp_implications})}
\label{app:metric_dp_smooth}

\begin{proof}
Fix review matrices $\reviewmatrix,\reviewmatrix'$ and write
\[
d=\|\reviewmatrix-\reviewmatrix'\|_{1,1}.
\]
Let $q,q'$ be the distributions over $\ksubsets$ induced by $\selectionrule(\reviewmatrix)$ and $\selectionrule(\reviewmatrix')$, respectively. By $\varepsilon$-metric DP,
\[
q(S)\le e^{\varepsilon d}q'(S)
\qquad\text{and}\qquad
q'(S)\le e^{\varepsilon d}q(S)
\]
for every $S\in\ksubsets$. Thus the likelihood ratio between $q$ and $q'$ is bounded by $e^{\varepsilon d}$, which implies
\[
d_{\mathrm{TV}}(q,q')
\le
\frac{e^{\varepsilon d}-1}{e^{\varepsilon d}+1}
=
\tanh\!\left(\frac{\varepsilon d}{2}\right).
\]

We now relate total variation distance over selected sets to $\ell_1$ distance between marginal probabilities. For each candidate $i$,
\[
\selectionprob_i(\reviewmatrix)-\selectionprob_i(\reviewmatrix')
=
\sum_{S\ni i}\bigl(q(S)-q'(S)\bigr).
\]
Applying the triangle inequality and using $|S|=\nselected$ for every $S\in\ksubsets$,
\[
\begin{aligned}
\|\selectionprob(\reviewmatrix)-\selectionprob(\reviewmatrix')\|_1
&\le
\sum_{i=1}^{\napps}\sum_{S\ni i}|q(S)-q'(S)| \\
&=
\sum_{S\in\ksubsets}|S|\cdot |q(S)-q'(S)| \\
&=
\nselected\|q-q'\|_1 \\
&=
2\nselected\,d_{\mathrm{TV}}(q,q') \\
&\le
2\nselected
\tanh\!\left(\frac{\varepsilon d}{2}\right).
\end{aligned}
\]
Using $\tanh(x)\le x$ gives
\[
\|\selectionprob(\reviewmatrix)-\selectionprob(\reviewmatrix')\|_1
\le
\varepsilon\nselected d
=
\varepsilon\nselected
\|\reviewmatrix-\reviewmatrix'\|_{1,1}.
\]
Therefore the induced marginals are $\smoothness$-smooth with $\smoothness\le \varepsilon\nselected$.

The leading constant is tight as $\varepsilon d\to 0$. To see this, take two disjoint sets $S_1,S_2\in\ksubsets$ and consider two binary distributions supported only on $S_1$ and $S_2$ with likelihood ratio $e^{\varepsilon d}$:
\[
q(S_1)=\frac{e^{\varepsilon d}}{1+e^{\varepsilon d}},
\qquad
q'(S_1)=\frac{1}{1+e^{\varepsilon d}},
\]
and $q(S_2)=1-q(S_1)$, $q'(S_2)=1-q'(S_1)$. Then
\[
d_{\mathrm{TV}}(q,q')
=
\tanh\!\left(\frac{\varepsilon d}{2}\right).
\]
Since $S_1$ and $S_2$ are disjoint,
\[
\|\selectionprob(\reviewmatrix)-\selectionprob(\reviewmatrix')\|_1
=
2\nselected\,d_{\mathrm{TV}}(q,q')
=
2\nselected
\tanh\!\left(\frac{\varepsilon d}{2}\right).
\]
As $\varepsilon d\to 0$, this equals
\[
\varepsilon\nselected d + o(d),
\]
so the linear constant $\varepsilon\nselected$ cannot be improved in general.
\end{proof}

\subsection{Proof of Smoothness and Regret of Softmax (Theorem~\ref{thm:softmax_properties})}
\label{app:proof_softmax}

\begin{proof}[Proof of smoothness]
Let $u=\util(\reviewmatrix)$ and $u'=\util(\reviewmatrix')$, and $\selectionprob(u),\selectionprob(u')$ the corresponding marginals under top-$\nselected$ softmax with temperature $\temp$. We prove $v\mapsto\selectionprob(v)$ is $(2/(e\temp))$-Lipschitz w.r.t.\ utilities. By integrating the Jacobian along $v_t=(1-t)u+t u'$,
$
\|\selectionprob(u')-\selectionprob(u)\|_1 \le \big(\sup_t \|J(v_t)\|_{1\to 1}\big)\|u'-u\|_1.
$
Fix $j\in[\napps]$. With $G_1,\dots,G_\napps$ i.i.d.\ $\mathrm{Gumbel}(0,\temp)$ and $Y_i(v)=v_i+G_i$, holding noise fixed, increasing $v_j$ can only improve $j$'s rank and worsen others': $\partial_j\selectionprob_j\ge 0$ and $\partial_j\selectionprob_i\le 0$ for $i\neq j$. The fixed-budget identity $\sum_i\selectionprob_i(v)=\nselected$ gives $\sum_i \partial_j\selectionprob_i = 0$. Combining, the column sum equals $\sum_i|\partial_j\selectionprob_i| = 2\partial_j\selectionprob_j$.

Conditioning on $G_{-j}$, there is a deterministic threshold $\theta_j$ such that $j$ is selected iff $v_j+G_j\ge \theta_j$, namely the $\nselected$-th largest among $\{v_\ell+G_\ell:\ell\neq j\}$. Hence $\selectionprob_j(v) = \mathbb{E}_{G_{-j}}[1-F_G(\theta_j-v_j)]$. Differentiating and using $f_G(x)\le 1/(e\temp)$,
$
\partial_j\selectionprob_j = \mathbb{E}_{G_{-j}}[f_G(\theta_j-v_j)] \le 1/(e\temp).
$
Thus every column of $J$ has $\ell_1$ norm at most $2/(e\temp)$, so $\|\selectionprob(u)-\selectionprob(u')\|_1 \le \frac{2}{e\temp}\|u-u'\|_1$. Combined with $\|u-u'\|_1\le \utillipschitz\|\reviewmatrix-\reviewmatrix'\|_{1,1}$ gives the stated bound.
\end{proof}

\paragraph{Relation to prior softmax Lipschitz analyses.} Recent work establishes that the standard softmax map (the $\nselected=1$ case) is exactly $1/2$-Lipschitz with respect to $\ell_p$ norms via its closed-form Jacobian~\citep{nair2025softmax}. The Gumbel-top-$\nselected$ inclusion map does not possess a simple closed-form Jacobian. We overcome this by analyzing the marginal inclusion map directly: a shared-Gumbel coupling gives coordinate-wise monotonicity, the fixed-budget identity bounds the full Jacobian column sum via its diagonal, and a threshold representation combined with a Gumbel density bound controls the diagonal. Our $2/e\approx 0.736$ is within a small constant of the tight $1/2$ for $\nselected=1$; we conjecture our bound is essentially tight for general $\nselected$, and our experiments in Section~\ref{sec:exp_smoothness_tightness} support this.

\begin{proof}[Proof of regret bound]
Fix $u\in\mathbb{R}^{\napps}$ and consider the top-$\nselected$ softmax rule with temperature $\temp$ implemented as sequential sampling without replacement. Let $i_t$ denote the item selected at step $t$ and $R_t$ the remaining items, so $|R_t|=\napps-t+1$. The conditional one-step regret $M_t-\mathbb{E}[u_{i_t}\mid R_t]$, where $M_t=\max_{i\in R_t} u_i$, is bounded via the variational form of log-sum-exp: $\mathbb{E}[u_{i_t}\mid R_t] = \temp\log\sum_{i\in R_t} e^{u_i/\temp} - \temp H(p_t)$ where $p_t$ is the softmax on $R_t$. Since $\log\sum_{i\in R_t} e^{u_i/\temp}\ge M_t/\temp$ and $H(p_t)\le \log|R_t|$, $M_t-\mathbb{E}[u_{i_t}\mid R_t]\le \temp\log|R_t|$. Summing over $t$ and bounding $\log(\napps-t+1)\le\log\napps$ yields $\regret(\selectionrule;u)\le \nselected\temp\log\napps$.
\end{proof}

\subsection{Individual Fairness Details}
\label{app:if_separation}

\begin{definition}[$\alpha$-Individual Fairness]
\label{def:individual_fairness}
For a parameter $\alpha \ge 0$, a selection rule $\selectionrule$ is \emph{$\alpha$-individually fair ($\alpha$-IF)} if its induced marginal probabilities $\selectionprob$ satisfy
$
|\selectionprob_i(\reviewmatrix) - \selectionprob_j(\reviewmatrix)|
\le \alpha \,|\util_i(\reviewmatrix) - \util_j(\reviewmatrix)|
$ for all $\reviewmatrix \in [0,1]^{\napps \times \nreviews}$ and all $i,j \in [\napps]$.
\end{definition}

\paragraph{Optimality of the \ourlottery for IF.}
For any fixed review matrix $\reviewmatrix$, finding the utility-maximizing marginal probabilities that satisfy $\alpha$-IF can be written as the linear program
\begin{align}
    \label{IF_LP} \tag{IF-LP}
    \max_{\selectionprob \in [0,1]^{\napps}} \quad & \selectionprob^\top \util(\reviewmatrix) \\  \notag
    \text{s.t.} \quad & |\selectionprob_i - \selectionprob_j| \le \alpha \,|\util_i(\reviewmatrix) - \util_j(\reviewmatrix)| \quad \forall i,j \in [\napps], \\ \notag
    & \textstyle\sum_{i=1}^{\napps} \selectionprob_i = \nselected.
    \notag
\end{align}
Configuring the \ourlottery with $\slope = \alpha$ yields the exact, instance-optimal solution $\selectionprob^*$ to \eqref{IF_LP}~\citep[Theorem 3]{bairaktari_fair_cohort_selection}.

\paragraph{Smoothness $\nRightarrow$ IF.} A trivial selection rule that completely ignores review scores and always selects the first $\nselected$ candidates is perfectly stable ($0$-smooth across datasets), but can arbitrarily violate IF because candidates with identical utilities receive maximally different selection probabilities ($1$ vs.\ $0$).

\paragraph{IF $\nRightarrow$ Smoothness.} Because IF does not constrain behavior across datasets, an IF-compliant mechanism can be arbitrarily discontinuous. Consider a review matrix $\reviewmatrix$ yielding utilities $\util(\reviewmatrix) = (1, \dots, 1, 0, \dots, 0)$ with exactly $\nselected$ ones. A $1$-IF mechanism could output $\selectionprob(\reviewmatrix) = (1, \dots, 1, 0, \dots, 0)$. For a perturbed matrix $\reviewmatrix'$ where the first candidate's utility drops by $\delta$ so that $\util(\reviewmatrix') = (1-\delta, 1, \dots, 1, 0, \dots, 0)$, the mechanism could instead output a uniform lottery $\selectionprob(\reviewmatrix') = (\nselected/\napps, \dots, \nselected/\napps)$, which still satisfies IF. The change in the first candidate's selection probability is $|1 - \nselected/\napps|$, while the input perturbation is only $\delta$. As $\delta \to 0$, the Lipschitz constant must satisfy $\smoothness \ge |1 - \nselected/\napps|/\delta$, which diverges to infinity.

\section{Details of the \ourlottery}
\label{app:linear_lottery_details}

\subsection{Efficient Computation of the \ourlottery}
\label{app:linear_lottery_algo}

The \ourlottery guarantees a unique valid probability assignment~\citep{wang2015projection}. Because the total probability function $\totalprobfunc(b) = \sum_{i=1}^{\napps} \clip_{[0,1]}(z_i + b)$ is continuous and monotonically non-decreasing from $0$ to $\napps$, a valid intercept exists for any budget $\nselected \in [0, \napps]$. While multiple intercepts $\intercept$ might satisfy the budget if $\totalprobfunc(b)$ contains flat regions, these plateaus only occur when all candidate probabilities are saturated at $0$ or $1$, ensuring the resulting probability vector itself is strictly unique.

Implementing the \ourlottery requires finding this intercept $\intercept$ that satisfies the budget constraint. There are many computationally efficient implementations given in prior work, including binary search~\citep{kong2020_rankmax} and water-filling methods~\citep{bairaktari_fair_cohort_selection}. For completeness, Algorithm~\ref{alg:linear_lottery_simple} outlines an exact, non-iterative implementation with runtime $O(\napps^2)$, adapted from \citet{wang2015projection}. The algorithm first finds the $2\napps$ breakpoints at which $\totalprobfunc(b)$ changes (where a candidate's probability hits the floor of $0$ or the ceiling of $1$), then locates the interval of breakpoints $[\intercept_j, \intercept_{j+1}]$ that contains a budget-feasible intercept and computes the exact intercept in this interval.

\begin{algorithm}[H]
\caption{Simple Exact \ourlottery via Breakpoint Search}
\label{alg:linear_lottery_simple}
\begin{algorithmic}[1]
\REQUIRE Scaled utilities $z \in \mathbb{R}^{\napps}$ where $z_i = \slope \cdot \util_i$, budget $\nselected$
\STATE Initialize breakpoints where probabilities hit $0$ or $1$: \\
$B \gets \{ -z_i \mid i \in [\napps] \} \cup \{ 1 - z_i \mid i \in [\napps] \}$
\STATE Sort $B$ in ascending order: $\intercept_1 \le \intercept_2 \le \dots \le \intercept_{2\napps}$
\STATE Define the total probability function: $\totalprobfunc(b) = \sum_{i=1}^{\napps} \clip_{[0,1]}(z_i + b)$
\FOR{$j = 1$ \TO $2\napps - 1$}
    \STATE $\totalprobfunc_{\text{low}} \gets \totalprobfunc(\intercept_j)$, $\totalprobfunc_{\text{high}} \gets \totalprobfunc(\intercept_{j+1})$
   \IF{$\totalprobfunc_{\text{low}} \le \nselected \le \totalprobfunc_{\text{high}}$}
    \STATE \textbf{return} $\intercept_j$ \textbf{if} $\totalprobfunc_{\text{low}} = \totalprobfunc_{\text{high}}$ \textbf{else} $\intercept_j + (\intercept_{j+1} - \intercept_j) \frac{\nselected - \totalprobfunc_{\text{low}}}{\totalprobfunc_{\text{high}} - \totalprobfunc_{\text{low}}}$
\ENDIF
\ENDFOR
\end{algorithmic}
\end{algorithm}

\subsection{Monotonicity Properties of the \ourlottery}
\label{app:linear_lottery_monotonicity}

\begin{proposition}[Monotonicity Properties]
\label{prop:linear-lottery-properties}
The \ourlottery satisfies:
\begin{enumerate}
    \item \textbf{Monotonicity in budget:} For any fixed utilities $\util \in \R^{\napps}$ and slope $\slope > 0$, if $\selectionprob$ and $\selectionprob'$ are the solution vectors for budgets $\nselected$ and $\nselected'$ respectively, then $\nselected' > \nselected \implies \selectionprob'_i \ge \selectionprob_i$ for all $i \in [\napps]$.
    \item \textbf{Monotonicity of size of lottery pool:} For any fixed utilities $\util \in \R^{\napps}$ and budget $\nselected \in (0, \napps)$, the size of the active lottery pool, $R(\slope) = \{i \in [\napps] \mid 0 < \selectionprob_i < 1\}$, is monotonically non-increasing in $\slope$.
\end{enumerate}
\end{proposition}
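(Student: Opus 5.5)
Both parts rest on the one-dimensional description of the lottery: $\selectionprob_i = \clip_{[0,1]}(z_i+\intercept)$ with $z_i=\slope u_i$, where $\intercept$ is chosen so that $\totalprobfunc(\intercept)=\nselected$ and $\totalprobfunc(b)=\sum_i \clip_{[0,1]}(z_i+b)$. As noted in Appendix~\ref{app:linear_lottery_algo}, $\totalprobfunc$ is continuous and non-decreasing in $b$, and the resulting probability vector is unique even where $\totalprobfunc$ has flat regions.

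\textbf{Monotonicity in budget.} Let $\intercept$ and $\intercept'$ be valid intercepts for budgets $\nselected<\nselected'$. If $\intercept'<\intercept$, then monotonicity of $\totalprobfunc$ gives $\totalprobfunc(\intercept')\le \totalprobfunc(\intercept)$, that is, $\nselected'\le\nselected$, a contradiction. Hence $\intercept'\ge\intercept$, and every coordinate satisfies
\[
\clip_{[0,1]}(z_i+\intercept') \ge \clip_{[0,1]}(z_i+\intercept),
\]
since clipping is non-decreasing. Uniqueness of the probability vector means the choice of intercept within a plateau does not matter.

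\textbf{Monotonicity of the lottery pool.} I would reparametrize by $c=\intercept/\slope$. Then $\selectionprob_i=\clip_{[0,1]}(\slope(u_i+c))$, so $i$ lies in the pool exactly when
\[
-c < u_i < -c+1/\slope .
\]
Because $\nselected\in(0,\napps)$ is typically non-integer, or at least some candidates are fractional, I would treat the pool in two steps.
\begin{enumerate}
\item Show that the pool consists of the candidates whose utilities fall in an open window $(a(\slope),\,a(\slope)+1/\slope)$, where $a=-c$.
\item Show that as $\slope$ increases, this window shrinks in width and its position moves in a way that makes the new window a subset of the old one, up to the candidates that lie in it.
\end{enumerate}
The width $1/\slope$ clearly decreases. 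The real issue is how the lower endpoint $a(\slope)$ moves.

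To control $a(\slope)$, write the budget equation as
\[
g(a,\slope)=\sum_i \clip_{[0,1]}(\slope(u_i-a))=\nselected .
\]
For $\slope'>\slope$ I would argue that $a(\slope')\ge a(\slope)$ and $a(\slope')+1/\slope'\le a(\slope)+1/\slope$. Suppose instead that $a(\slope')<a(\slope)$. Then every candidate with $u_i\ge a(\slope)$ has $\slope'(u_i-a')\ge \slope(u_i-a)$, because both the slope and the gap $u_i-a$ are larger. Every candidate with $u_i\le a(\slope)$ has probability $0$ under $\slope$, so its probability cannot decrease. The total would therefore be at least $\nselected$, and strictly larger if some candidate moves, which contradicts the budget. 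The symmetric argument about the upper endpoint uses the complements $1-\selectionprob_i$.

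The hard part is the degenerate cases: ties at the boundaries and plateaus of $\totalprobfunc$, where the strict increase can fail. Here I would first prove the statement for the closed window, and then argue that a candidate strictly inside the window at $\slope'$ is strictly inside it at $\slope$ via the nesting of the windows, so that $R(\slope')\subseteq R(\slope)$. If the endpoint argument fails in a tie case, a fallback is to show that $R(\slope)$ is the set of coordinates strictly between the pool's extreme utilities and to compare counts directly.
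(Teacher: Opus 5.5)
Your part (1) is the paper's argument essentially verbatim. For part (2) you use the same reparametrized window $\{i : a(\slope) < \util_i < a(\slope)+1/\slope\}$ as the paper (its $\tau$ is your $a$). The substantive difference is that you control \emph{both} endpoints. You handle the lower one by coordinatewise domination, and the upper one by the symmetric argument on $1-\selectionprob_i=\clip_{[0,1]}(\slope(a+1/\slope-\util_i))$ with $\sum_i(1-\selectionprob_i)=\napps-\nselected$.

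The paper proves only $\tau(\slope_2)\ge\tau(\slope_1)$, via $F(\slope_2,\tau)\ge F(\slope_1,\tau)$ pointwise, which is your lower-endpoint step in aggregated form. It then asserts nesting from that plus $1/\slope_2\le 1/\slope_1$. Those two facts alone do not give $\tau(\slope_2)+1/\slope_2\le\tau(\slope_1)+1/\slope_1$: a rising lower endpoint with a shrinking width can still push the upper endpoint up. So your complement step is exactly what makes the nesting rigorous, and your route is more complete than the paper's write-up.

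Your worry about degenerate cases (ties, plateaus of $\totalprobfunc$) resolves more simply than your closed-window or count-comparison fallbacks. If $a(\slope')<a(\slope)$, you showed $\selectionprob'_i\ge\selectionprob_i$ for every $i$. Since both vectors sum to $\nselected$, this forces $\selectionprob'=\selectionprob$, hence $R(\slope')=R(\slope)$. The same holds if the upper endpoint moves up, by the complement argument. So either both endpoint inequalities hold and the open windows nest, or the marginals are unchanged. You don't need a contradiction, any strictness, or a separate tie analysis.
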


\begin{proof}[Proof of (1)]
Let $\totalprobfunc(\intercept) = \sum_j \clip_{[0,1]}(\slope \util_j + \intercept)$. Each term is non-decreasing in $\intercept$, so $\totalprobfunc$ is non-decreasing. Let $\intercept_1,\intercept_2$ correspond to budgets $\nselected_1<\nselected_2$. If $\intercept_2<\intercept_1$ then $\nselected_2=\totalprobfunc(\intercept_2)\le\totalprobfunc(\intercept_1)=\nselected_1$, a contradiction. Thus $\intercept_2\ge\intercept_1$, and since $\clip_{[0,1]}$ is non-decreasing, $\selectionprob_i(\intercept_2)\ge\selectionprob_i(\intercept_1)$ for all $i$.
\end{proof}

\begin{proof}[Proof of (2)]
Reparameterize $\intercept=-\slope\tau$, so $\selectionprob_i = \clip_{[0,1]}(\slope(\util_i-\tau))$. Define $F(\slope,\tau) = \sum_i \clip_{[0,1]}(\slope(\util_i-\tau))$; for fixed $\slope$, $F$ is non-increasing in $\tau$, and for fixed $\tau$, non-decreasing in $\slope$. Let $\tau(\slope)$ satisfy $F(\slope,\tau(\slope))=\nselected$. If $\slope_2>\slope_1$, then for every $\tau$, $F(\slope_2,\tau)\ge F(\slope_1,\tau)$, hence $\tau(\slope_2)\ge\tau(\slope_1)$. Moreover $0<\selectionprob_i<1 \iff \tau<\util_i<\tau+1/\slope$, so $R(\slope) = \{i:\tau(\slope)<\util_i<\tau(\slope)+1/\slope\}$. Combining $\tau(\slope_2)\ge\tau(\slope_1)$ and $1/\slope_2\le 1/\slope_1$, $(\tau(\slope_2),\tau(\slope_2)+1/\slope_2) \subseteq (\tau(\slope_1),\tau(\slope_1)+1/\slope_1)$, so $R(\slope_2)\subseteq R(\slope_1)$.
\end{proof}

\section{Varying Number of Reviews Per Candidate}
\label{app:variable_reviews}

For simplicity, the main text assumes that every candidate receives the same number $\nreviews$ of reviews. This assumption is not essential. Suppose instead that candidate $i$ receives $\nreviews_i$ reviews, with $\nreviews_i \ge 1$, and let
\[
\nreviews_{\min} := \min_{i \in [\napps]} \nreviews_i .
\]
The review data is then a ragged array consisting of entries
$\reviewmatrix_{i,j} \in [0,1]$ for $j \in [\nreviews_i]$, and we define
\[
\|\reviewmatrix-\reviewmatrix'\|_{1,1}
=
\sum_{i=1}^{\napps}\sum_{j=1}^{\nreviews_i}
|\reviewmatrix_{i,j}-\reviewmatrix'_{i,j}|.
\]
All definitions in the main text extend directly with this metric: smoothness is still a Lipschitz condition on the induced marginal selection probabilities with respect to $\|\cdot\|_{1,1}$.

The theory also extends without change once the utility Lipschitz constant is updated. For example, if utility is the mean score
\[
\util_i(\reviewmatrix)
=
\frac{1}{\nreviews_i}
\sum_{j=1}^{\nreviews_i} \reviewmatrix_{i,j},
\]
then changing candidate $i$'s reviews by total amount $\delta$ changes $\util_i$ by at most $\delta/\nreviews_i$. Hence the mean-score utility is $\utillipschitz$-Lipschitz with
\[
\utillipschitz
=
\max_i \frac{1}{\nreviews_i}
=
\frac{1}{\nreviews_{\min}}.
\]
Thus, all upper-bound results for the \ourlottery continue to hold by replacing $\nreviews$ with $\nreviews_{\min}$ in the mean-score case. In particular, setting the \ourlottery slope to
\[
\slope = \frac{\smoothness}{2\utillipschitz}
=
\frac{\smoothness \nreviews_{\min}}{2}
\]
guarantees $\smoothness$-smoothness, and the regret bound becomes
\[
\max_{\reviewmatrix}
\regret(\selectionrule;\reviewmatrix)
\le
\frac{\nselected(1-\nselected/\napps)}
{2\nreviews_{\min}\smoothness}.
\]

This extension is conservative. Candidates with more than $\nreviews_{\min}$ reviews have mean utilities that are less sensitive to any single review perturbation, so the actual local smoothness of the \ourlottery may be better than the worst-case bound suggests. The lower-bound construction in the main text should also be interpreted with $\nreviews_{\min}$ as the worst-case review count: when candidates with the minimum number of reviews are available for the construction, the same indistinguishability argument applies verbatim. More generally, heterogeneous review counts can only make some candidates' utilities less sensitive to review perturbations, so using $\nreviews_{\min}$ gives a simple worst-case guarantee rather than a tight instance-specific characterization. We use this construction throughout our experiments where there is variation in the number of reviews per paper or proposal in real peer review datasets.

\section{Additional Theoretical Results}
\label{app:theory_results}

\subsection{General Regret Lower Bound}
\label{app:general_lb}

\begin{definition}[Lower Lipschitz Condition]
\label{def:c-gap}
Row-wise utility function $u:[0,1]^{\nreviews}\to[0,1]$ satisfies the \emph{$\utilgap$-lower Lipschitz condition} if $\inf_{\delta \in (0, 1/\utilgap]} \max_{x,y:\|x-y\|_1=\delta} \frac{|u(x)-u(y)|}{\|x-y\|_1} \ge \utilgap.$
\end{definition}

\begin{lemma}[Utilities satisfying the $\utilgap$-lower Lipschitz condition]
\label{lem:c-gap-table}
The following utility functions satisfy the $\utilgap$-lower Lipschitz condition with the specified constants. The table also lists the $\ell_1$-Lipschitz constant $\utillipschitz$ (Definition~\ref{def:lipschitz_utility}) of each utility function.
\end{lemma}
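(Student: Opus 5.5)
The plan is to verify the table entry by entry. For each utility I establish two things separately: the upper constant $\utillipschitz$ of Definition~\ref{def:lipschitz_utility}, and the lower constant $\utilgap$ of Definition~\ref{def:c-gap}. Since the utilities are row-wise, the upper bound reduces to a per-row statement. If each row map $u:[0,1]^{\nreviews}\to[0,1]$ satisfies $|u(x)-u(y)|\le \utillipschitz\|x-y\|_1$, then summing over the rows $i\in[\napps]$ gives $\|\util(\reviewmatrix)-\util(\reviewmatrix')\|_1\le \utillipschitz\|\reviewmatrix-\reviewmatrix'\|_{1,1}$.

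\textbf{Upper constants.} For the mean, $|\bar x-\bar y|\le \frac{1}{\nreviews}\|x-y\|_1$, so $\utillipschitz=1/\nreviews$. For the min, max and median I use the standard fact that every order statistic $x_{(k)}$ is $1$-Lipschitz with respect to $\|\cdot\|_\infty$, which is at most $\|\cdot\|_1$. To prove it, note that $x_{(k)}=\max_{|A|=k}\min_{i\in A}x_i$ and that min and max of $1$-Lipschitz maps are again $1$-Lipschitz. If the median is defined as an average of two middle order statistics, it is a convex combination of $1$-Lipschitz maps and is therefore still $1$-Lipschitz. So $\utillipschitz=1$ for these three.

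\textbf{Lower constants.} For each $\delta\in(0,1/\utilgap]$ I exhibit a pair $x,y\in[0,1]^{\nreviews}$ with $\|x-y\|_1=\delta$ and $|u(x)-u(y)|\ge \utilgap\,\delta$. This lower-bounds the inner max for every $\delta$ in the range, and hence the infimum.
\begin{itemize}
\item \emph{Mean,} with $\utilgap=1/\nreviews$ so that $\delta\le\nreviews$: take $x=0$ and $y=(\delta/\nreviews)\mathbf 1\in[0,1]^{\nreviews}$. The mean changes by exactly $\delta/\nreviews$.
\item \emph{Max,} with $\utilgap=1$ and $\delta\le 1$: take $x=0$ and $y=\delta e_1$.
\item \emph{Min,} with $\utilgap=1$: take $x=(1,\dots,1,0)$ and raise its last entry to $\delta$. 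The min moves from $0$ to $\delta$.
\item \emph{Median, odd $\nreviews$:} take $x$ with $(\nreviews-1)/2$ ones and $(\nreviews+1)/2$ zeros, and raise one zero to $\delta$. The sorted middle entry becomes $\delta$, so $\utilgap=1$.
\item \emph{Median, even $\nreviews$ (averaged middle pair):} the same construction moves only one of the two middle order statistics. This gives ratio $1/2$, so there I would record $\utilgap=1/2$. I would match whatever median convention the table adopts.
\end{itemize}

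\textbf{Main obstacle.} The only delicate point is feasibility. Each constructed pair must stay inside $[0,1]^{\nreviews}$ for every $\delta$ up to $1/\utilgap$, and it must achieve $\|x-y\|_1$ exactly $\delta$. For the mean this is why the perturbation is spread over all $\nreviews$ coordinates rather than placed on one: a single coordinate can move by at most $1$, whereas the range of $\delta$ extends to $\nreviews$.

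I would also check the even-median case carefully against the table's convention. That is the one entry where the lower constant is smaller than the Lipschitz constant, so it is where the near-optimality factor could degrade.
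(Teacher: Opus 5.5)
Your proposal is correct and follows the paper's route. The paper's justification is just its table of witness pairs $x^0(\delta)\to x(\delta)$, valid for every $\delta\in(0,1/\utilgap]$. Your witnesses coincide with the table for the mean and max, and differ only cosmetically for the min and odd median (entries equal to $1$ where the paper uses $\delta$). Your order-statistic argument also supplies the upper constants $\utillipschitz$, which the paper leaves implicit.

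One correction to your out-of-scope aside on even $\nreviews$. The averaged median is in fact $\tfrac12$-Lipschitz in $\ell_1$: sorting is an $\ell_1$-contraction, so the two middle order statistics move by at most $\|x-y\|_1$ in total. Hence $\utilgap=\utillipschitz=\tfrac12$ there, and the feared degradation does not occur. For $\delta\in(1,2]$, start from a row whose two middle order statistics are both $0$ and raise two of those zeros to $\delta/2$ each, since raising one zero to $\delta$ leaves $[0,1]$.
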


\begin{center}
\renewcommand{\arraystretch}{1.3}
\begin{tabular}{|c|c|c|c|c|}
\hline
Utility $\util$
& $\utillipschitz$
& $\utilgap$
& Baseline row $x^0(\delta)$
& Perturbed row $x(\delta)$ \\
\hline
Mean & $\frac{1}{\nreviews}$ & $\frac{1}{\nreviews}$ & $(0,\dots,0)$ & $(\frac{\delta}{\nreviews},\dots,\frac{\delta}{\nreviews})$ \\ \hline
Max & $1$ & $1$ & $(0,\dots,0)$ & $(\delta,0,\dots,0)$ \\ \hline
Min & $1$ & $1$ & $(0,\delta,\dots,\delta)$ & $(\delta,\delta,\dots,\delta)$ \\ \hline
Median (odd $\nreviews=2h+1$) & $1$ & $1$ & $(\underbrace{0,\dots,0}_{h+1}, \underbrace{\delta,\dots,\delta}_{h})$ & flip one $0$ to $\delta$ \\ \hline
\end{tabular}
\end{center}

\begin{remark}
For each of the above utilities, $\utilgap = \utillipschitz$. In general, $\utillipschitz$ is a worst-case global upper bound on the Lipschitz quotient over all row pairs, whereas $\utilgap$ is a guarantee on the maximum achievable quotient for any distance budget up to $1/\utilgap$. It always holds that $\utilgap \le \utillipschitz$, with equality when these worst-case witnesses achieve the global upper bound.
\end{remark}

\begin{theorem}[Generic regret lower bound]
\label{lem:generic-lb}
Let $\util$ be a row-wise utility function $\util_i(\reviewmatrix) = u(\reviewmatrix_{i,1},\dots,\reviewmatrix_{i,\nreviews})$ where $u$ satisfies the $\utilgap$-lower Lipschitz condition. Let $\selectionrule$ be any $\smoothness$-smooth selection rule. Then
\[
\max_{\reviewmatrix} \regret(\selectionrule; \reviewmatrix) \ge
\begin{cases}
\frac{\nselected\,\utilgap\!\left(1-\frac{\nselected}{\napps}\right)^2}
{2\,\smoothness},
& \text{if }
\smoothness \ge \utilgap\!\left(1-\frac{\nselected}{\napps}\right), \\[0.6em]
\nselected\!\left(1-\frac{\nselected}{\napps}
-\frac{\smoothness}{2\,\utilgap}\right),
& \text{if }
\smoothness < \utilgap\!\left(1-\frac{\nselected}{\napps}\right).
\end{cases}
\]
\end{theorem}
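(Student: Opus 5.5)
The argument mirrors the indistinguishability proof of Theorem~\ref{thm:lower-bound}, with the mean-score perturbation replaced by the witness rows supplied by the $\utilgap$-lower Lipschitz condition. Fix a distance budget $\delta \in (0, 1/\utilgap]$. By Definition~\ref{def:c-gap}, there exist rows $x^0 = x^0(\delta)$ and $x = x(\delta)$ with $\|x - x^0\|_1 = \delta$ and $|u(x) - u(x^0)| \ge \utilgap\delta$. Relabeling if necessary, assume $u(x) - u(x^0) \ge \utilgap\delta$.

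The baseline instance $\reviewmatrix^0$ has every row equal to $x^0$, so all candidates share the same utility $u(x^0)$. Let $p^0 = \selectionprob(\reviewmatrix^0)$, and let $T$ be the $\nselected$ candidates with the smallest entries of $p^0$. Averaging gives $\sum_{i\in T} p^0_i \le \nselected^2/\napps$. The perturbed instance $\reviewmatrix^{(T)}$ replaces the rows indexed by $T$ with $x$. Then $\|\reviewmatrix^{(T)} - \reviewmatrix^0\|_{1,1} = \nselected\delta$. Note that this carries no factor $\nreviews$, because $\delta$ is now the row-level $\ell_1$ distance.

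Smoothness gives $\|p - p^0\|_1 \le \smoothness\nselected\delta$. Both $p$ and $p^0$ sum to $\nselected$, so the positive and negative parts of $p - p^0$ are equal, and therefore $\sum_{i\in T}(p_i - p^0_i) \le \tfrac12\smoothness\nselected\delta$. Hence $\sum_{i\in T} p_i \le \nselected^2/\napps + \smoothness\nselected\delta/2$.

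Next compute the regret on $\reviewmatrix^{(T)}$. Candidates in $T$ have utility $a = u(x)$ and the others have $b = u(x^0)$, with $a - b =: g \ge \utilgap\delta$. OPT selects exactly $T$, and the rule's utility is $b\nselected + g\sum_{i\in T} p_i$. The regret is therefore $g(\nselected - \sum_{i\in T} p_i)$. This gives
\[
\regret \ge g\,\nselected\left(1 - \tfrac{\nselected}{\napps} - \tfrac{\smoothness\delta}{2}\right).
\]
The bracket does not depend on $g$. Whenever it is nonnegative we may use $g \ge \utilgap\delta$, which yields
\[
\regret \ge \nselected\,\utilgap\delta\left(1-\tfrac{\nselected}{\napps}-\tfrac{\smoothness\delta}{2}\right).
\]
Substituting $\delta = t/\utilgap$ with $t \in (0,1]$ turns this into $\nselected t\,(1-\nselected/\napps - \smoothness t/(2\utilgap))$. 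This is exactly the mean-utility expression with $\smoothness\nreviews$ replaced by $\smoothness/\utilgap$.

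The final step maximizes this concave quadratic in $t$ over $(0,1]$. The unconstrained maximizer is $t^* = \utilgap(1-\nselected/\napps)/\smoothness$, which is feasible exactly when $\smoothness \ge \utilgap(1-\nselected/\napps)$. There the bound equals $\nselected\utilgap(1-\nselected/\napps)^2/(2\smoothness)$. Otherwise take $t = 1$, i.e.\ $\delta = 1/\utilgap$, which gives $\nselected(1-\nselected/\napps-\smoothness/(2\utilgap))$. At both choices the bracket is nonnegative, which justifies the earlier substitution $g \ge \utilgap\delta$.

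The main point needing care is the witness step. Definition~\ref{def:c-gap} gives a max over pairs for each $\delta$ in the range, so a witness pair exists at the chosen $\delta$; if the max is not attained, a near-witness with an $\eta$-loss followed by $\eta \to 0$ suffices. We also need the witness rows to lie in $[0,1]^{\nreviews}$ and to have utilities in $[0,1]$, which the restriction $\delta \le 1/\utilgap$ is presumably designed to allow. Orientation is harmless: if $u(x) < u(x^0)$, swap the roles of $x$ and $x^0$.
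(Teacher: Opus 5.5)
Your proposal is correct and follows essentially the same route as the paper's proof. The paper uses the same all-$x^0(\delta)$ baseline, takes $T$ to be the $\nselected$ lowest-probability candidates, halves $\|p-p^0\|_1$ via the budget constraint, and maximizes over $\delta\in(0,1/\utilgap]$ (your $\delta=t/\utilgap$ is only a reparametrization). Your extra checks---that the bracket is nonnegative before replacing the gap $g$ by $\utilgap\delta$, and a limiting argument if the maximum in the lower Lipschitz condition is not attained---are small refinements of steps the paper states without comment.
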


\begin{proof}
Fix $\delta\in(0, 1/\utilgap]$ and let $x(\delta), x^0(\delta)$ achieve the maximum in the $\utilgap$-lower Lipschitz condition, ordered so $u(x(\delta))\ge u(x^0(\delta))$. Define $\reviewmatrix^0$ with every row equal to $x^0(\delta)$, let $p^0=\selectionprob(\reviewmatrix^0)$, and let $T$ be the $\nselected$ candidates with smallest probabilities under $p^0$; then $\sum_{i\in T} p_i^0 \le \nselected^2/\napps$. Construct $\reviewmatrix^{(T)}$ by replacing rows in $T$ with $x(\delta)$. Then $\|\reviewmatrix^{(T)}-\reviewmatrix^0\|_{1,1}=\nselected\delta$ and by smoothness, $\|p-p^0\|_1 \le \smoothness\nselected\delta$, where $p=\selectionprob(\reviewmatrix^{(T)})$. Since $\sum_i p_i=\sum_i p_i^0=\nselected$, $\sum_{i\in T}p_i \le \nselected^2/\napps + \smoothness\nselected\delta/2$. On $\reviewmatrix^{(T)}$, each $i\in T$ gains at least $\utilgap\delta$ utility, so
$
\regret(\selectionrule;\reviewmatrix^{(T)}) \ge \nselected\utilgap\delta(1 - \nselected/\napps - \smoothness\delta/2).
$
Maximizing over $\delta\in(0, 1/\utilgap]$ yields the two-regime bound.
\end{proof}

\begin{remark}
Since $\utilgap = \utillipschitz$ for the mean, max, min, and median by Lemma~\ref{lem:c-gap-table}, the lower bound on the worst-case regret of any $\smoothness$-smooth selection rule is $\frac{\nselected (1-\nselected/\napps)^2 \utillipschitz}{2\smoothness}$ when $\smoothness \ge \utillipschitz(1-\nselected/\napps)$, matching the upper bound for the \ourlottery (Theorem~\ref{thm:linear_regret}) up to a factor of $1-\nselected/\napps$.
\end{remark}

\section{Ex Post Valid Sampling}
\label{app:ex_post_valid_sampling}

Several lottery designs take interval estimates of candidate quality as input rather than (or in addition to) point scores. The Swiss NSF procedure~\citep{adam2019science, heyard2022rethinking} sets the funding line using uncertainty intervals, and MERIT~\citep{goldberg2025principled} selects candidates to maximize worst-case utility over all rankings consistent with such intervals. A central requirement in interval-based designs is \emph{ex post validity}: every realized outcome should be consistent with the interval estimates, in the sense that no selected candidate is clearly dominated by an unselected one.

\begin{definition}[Ex post validity]
Given intervals $[\lb_i, \ub_i]$ for each candidate, define the dominance relation $i \succ j$ if $\lb_i > \ub_j$. A selected set $S$ is \emph{ex post valid} if $i \succ j$ and $j \in S$ imply $i \in S$.
\end{definition}

Smoothness and ex post validity may be incompatible: if many candidates have near-zero-width non-overlapping intervals, ex post validity demands a deterministic selection while smoothness requires that selection probabilities remain close. The \ourlottery specifies marginal selection probabilities $\selectionprob \in \mathcal{C}_{\napps,\nselected}$, but many joint distributions over size-$\nselected$ subsets can implement the same marginals; we give two resolutions to this tension depending on the funder's priorities.

\paragraph{Resolution 1: Core-width compatibility.}
If the intervals are sufficiently wide relative to the smoothness scale, the tension does not arise. Specifically, if
\begin{equation}
\label{eq:core-width}
\lb_i \le \util_i - \frac{1}{2\slope}
\quad\text{and}\quad
\ub_i \ge \util_i + \frac{1}{2\slope}
\qquad \forall i \in [\napps],
\end{equation}
then any dominance pair $i \succ j$ implies $\util_i - \util_j > 1/\slope$, and the \ourlottery automatically respects every dominance relation:

\begin{proposition}[Automatic ex post validity under core-width]
\label{prop:auto-validity}
Let $\selectionprob$ be the marginals produced by the \ourlottery with slope $\slope>0$. If~\eqref{eq:core-width} holds for all $i \in [\napps]$, then for every dominance pair $i \succ j$, $\selectionprob_i = 1$ or $\selectionprob_j = 0$. Consequently, every subset in the support of any sampling scheme implementing these marginals is ex post valid.
\end{proposition}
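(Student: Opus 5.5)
The argument uses the clipped-linear form of the marginals, $\selectionprob_k = \clip_{[0,1]}(\slope \util_k + \intercept)$, and converts the utility gap from a dominance pair into a gap of more than $1$ in the pre-clip values. I will not need the projection characterization (Proposition~\ref{prop:projection}); the explicit clip formula from Algorithm~\ref{alg:linear_lottery} suffices.

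\emph{Step 1: from dominance to a utility gap.} Fix a dominance pair $i \succ j$, so $\lb_i > \ub_j$. Chaining this with the two inequalities in \eqref{eq:core-width} gives
\[
\util_i - \tfrac{1}{2\slope} \;\ge\; \lb_i \;>\; \ub_j \;\ge\; \util_j + \tfrac{1}{2\slope}.
\]
Hence $\util_i - \util_j > 1/\slope$. Writing $a_k = \slope \util_k + \intercept$ for the pre-clip values, this says $a_i - a_j > 1$.

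\emph{Step 2: case split on $a_j$.} If $a_j \le 0$, then $\selectionprob_j = 0$. Otherwise $a_j > 0$, so $a_i > a_j + 1 > 1$ and therefore $\selectionprob_i = 1$. Either way, $\selectionprob_i = 1$ or $\selectionprob_j = 0$. This holds for any intercept $\intercept$, so the possible non-uniqueness of $\intercept$ discussed in Appendix~\ref{app:linear_lottery_algo} causes no difficulty.

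\emph{Step 3: consequence for sampled sets.} Take any sampling scheme whose marginals equal $\selectionprob$, and any set $S$ in its support.
\begin{itemize}
\item If $\selectionprob_j = 0$, then $j$ lies in no set of positive probability, so $j \notin S$.
\item If $\selectionprob_i = 1$, then $i$ lies in every set of positive probability, so $i \in S$.
\end{itemize}
In both cases the implication "$j \in S \Rightarrow i \in S$" holds for every dominance pair. Hence every support set is ex post valid.

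There is no real obstacle here. The only points needing care are getting the direction of the inequality chain right in Step 1 and keeping the strict inequality, since that is what makes $a_i > 1$ in Step 2.
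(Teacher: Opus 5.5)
Your proof is correct and follows essentially the same argument as the paper: chain the core-width inequalities to get $\util_i - \util_j > 1/\slope$, then note that if $\selectionprob_j > 0$ the pre-clip value for $i$ exceeds $1$, which forces $\selectionprob_i = 1$. Your Step~3 also spells out the support-set consequence, which the paper's proof leaves implicit.
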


\begin{proof}
Suppose $i \succ j$, so $\lb_i > \ub_j$. By~\eqref{eq:core-width}, $\util_i-\tfrac{1}{2\slope} \ge \lb_i > \ub_j \ge \util_j+\tfrac{1}{2\slope}$, hence $\util_i-\util_j > 1/\slope$. Writing $\selectionprob_i=\clip_{[0,1]}(\slope(\util_i-\tau))$ for budget-enforcing threshold $\tau$, $\slope(\util_i-\tau) > \slope(\util_j-\tau)+1$. If $\selectionprob_j>0$, then $\slope(\util_j-\tau)>0$, so $\slope(\util_i-\tau)>1$ and $\selectionprob_i=1$.
\end{proof}

This resolution suggests that a funder could preserve exact smoothness by padding intervals to be wide enough relative to $1/\slope$ (equivalently, setting a stricter standard for asserting dominance). It effectively requires the funder to relax their ex post validity constraint.

\paragraph{Resolution 2: Projection onto valid marginals.}
When the core-width condition fails, a funder could sacrifice smoothness by projecting the \ourlottery marginals onto the set of all ex post valid marginal probabilities. Let $\mathcal{R}$ denote the family of ex post valid subsets and define the \emph{valid marginal polytope} $\validpolytope = \conv\{\mathbf{1}_S : S \in \mathcal{R}\}$. Given \ourlottery marginals $\selectionprob^{\mathrm{lin}}$, define the ex post feasible projection
\begin{equation}
\label{eq:projection_valid_marginals}
\hat{\selectionprob}
=
\argmin_{\selectionprob \in \validpolytope}
\|\selectionprob-\selectionprob^{\mathrm{lin}}\|_2^2.
\end{equation}
This quadratic program over $\validpolytope$ can be solved via Frank--Wolfe iteration, where the linear minimization oracle at each step requires finding a minimum-weight ex post valid subset, solvable in polynomial time over interval orders~\citep{goldberg2025principled}. After $T$ iterations, the iterate is a convex combination of at most $T$ valid subsets, yielding an explicit sampling distribution supported on ex post valid outcomes. This resolution preserves exact ex post validity but may sacrifice the global smoothness guarantee. 

\paragraph{Design choice.} Core-width compatibility preserves smoothness but requires wider intervals; projection preserves the original intervals and enforces strict ex post validity but may weaken smoothness. The choice depends on whether the funder prioritizes stable probabilities under score perturbations or strict interval-dominance in every realized outcome.

\section{Details on Datasets and Experimental Setup}
\label{app:data_details}

We use three real peer review datasets and one synthetic family:
\begin{itemize}
    \item \textbf{ICLR 2025} ($\napps = 3{,}710$, $\nreviews_{\min} = 3$): review scores on a $1$--$10$ scale.
    \item \textbf{NeurIPS 2024} ($\napps = 4{,}034$, $\nreviews_{\min} = 3$): review scores on a $1$--$10$ scale.
    \item \textbf{Swiss NSF} ($\napps = 353$, $\nreviews_{\min} = 5$): review scores on a $1$--$6$ scale.
    \item \textbf{Beta (synthetic)} ($\napps = 200$, $\nreviews = 5$): each review drawn i.i.d.\ from a symmetric $\text{Beta}(\alpha, \alpha)$ distribution on $[0,1]$, discretized to $10$ levels. In the main experiments we use $\alpha = 2$; in our Beta sweep we vary $\alpha \in \{1, 2, \ldots, 10\}$.
\end{itemize}
All review scores are normalized to $[0,1]$ using the known review scale, and utility is the mean normalized score. Unless otherwise noted, we set the softmax temperature to $\temp = 2\utillipschitz / (e\smoothness)$ so that its smoothness matches the \ourlottery's target $\smoothness$ (Theorem~\ref{thm:softmax_properties}). For comparisons against existing baselines, we use $\smoothness = 1/\nreviews$, matching the Lipschitz constant of the mean utility function.

\section{Additional Experimental Results}
\label{app:additional_experiments}

\begin{figure}[t]
    \centering
    \includegraphics[width=0.7\linewidth]{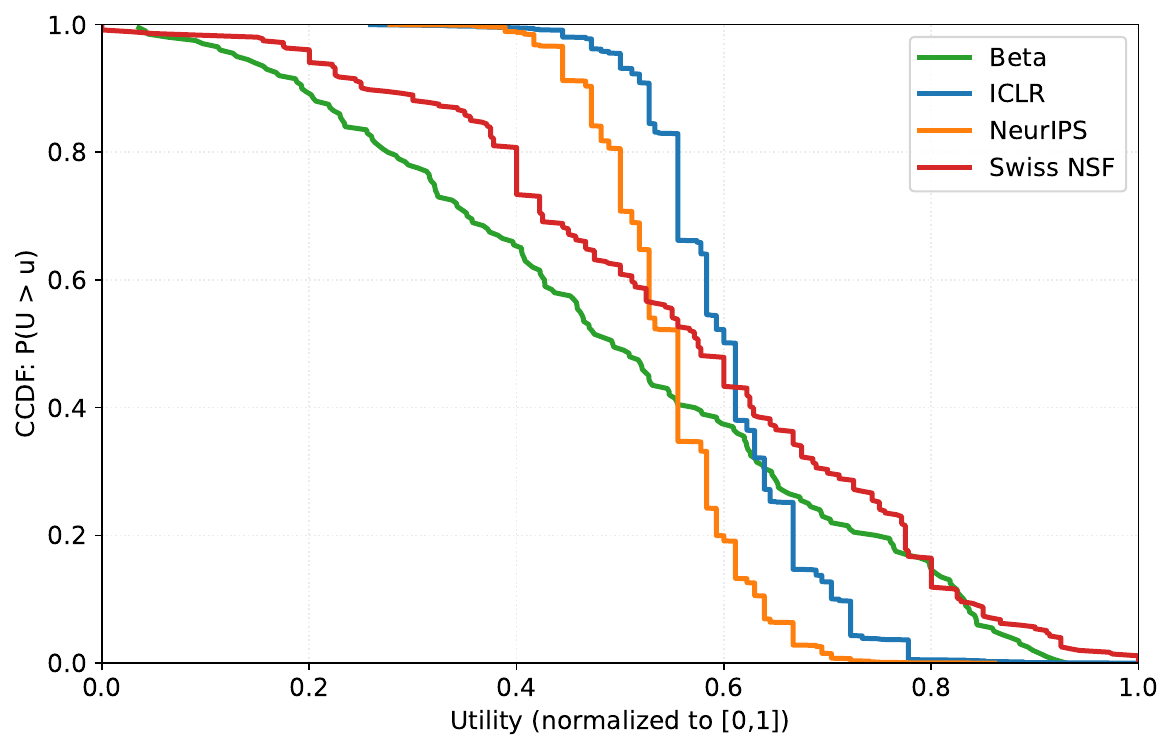}
    \caption{CCDF of mean normalized utilities across datasets. Swiss NSF and Beta have heavier tails with larger utility gaps, while ICLR and NeurIPS are concentrated near the center.}
    \label{fig:utility_ccdf}
\end{figure}

In this section, we provide additional experimental results. Figure~\ref{fig:utility_ccdf} shows the complementary cumulative distribution function (CCDF) of mean utilities across datasets. The ICLR and NeurIPS datasets have utility distributions tightly concentrated around $0.5$--$0.6$, with narrow interquartile ranges, while the Swiss NSF and Beta datasets spread utilities more broadly, creating larger gaps between candidates near typical acceptance thresholds. As we show below, this structure directly affects regret: datasets with more compressed utility distributions near the acceptance boundary incur higher regret under smooth selection rules, because the mechanism has less room to differentiate between candidates while satisfying the smoothness constraint.

\subsection{Regret-Smoothness Trade-off}

In this section, we provide additional results characterizing the tradeoff between regret and smoothness (Section~\ref{sec:exp_regret} in the main text).

\begin{figure}[t]
    \centering
    \includegraphics[width=0.7\linewidth]{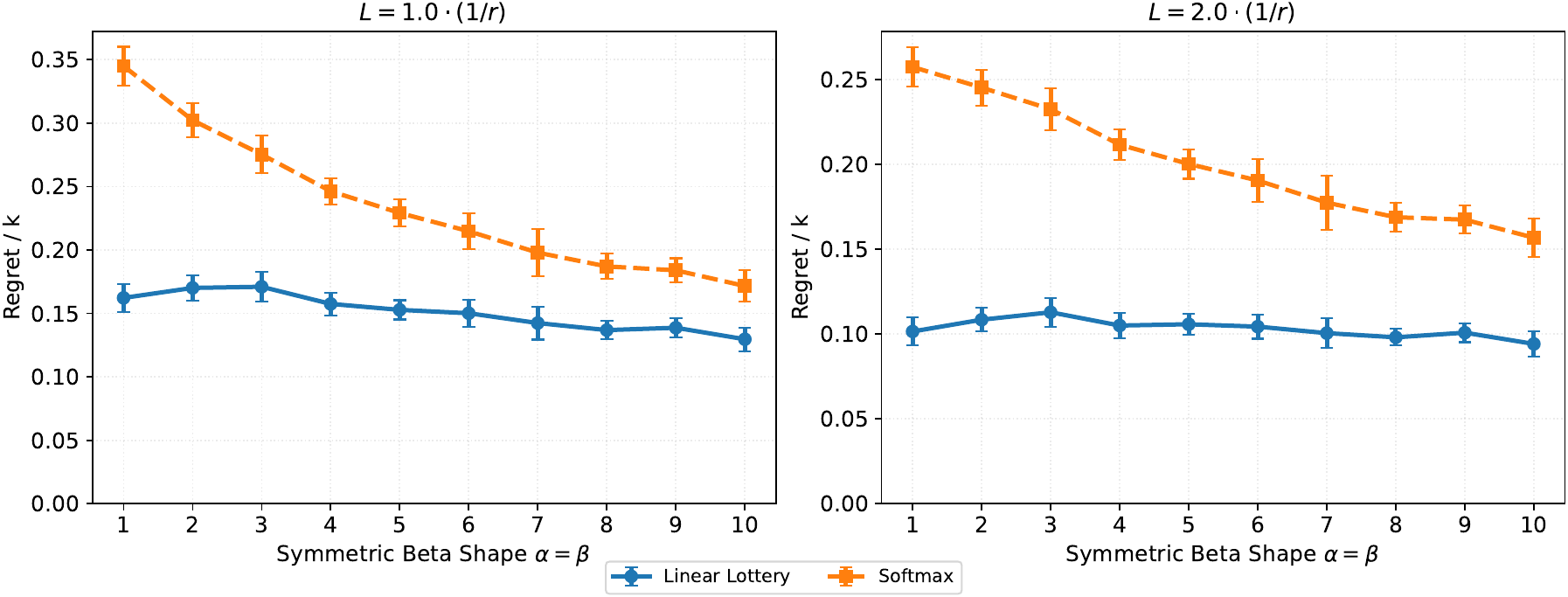}
    \caption{Regret vs.\ Beta shape parameter $\alpha = \beta$ at acceptance rate $10\%$. As the distribution concentrates around $0.5$ (larger $\alpha$), regret decreases for both selection rules because the worst-case utility gap narrows.}
    \label{fig:regret_beta_sweep}
\end{figure}

The Beta sweep (Figure~\ref{fig:regret_beta_sweep}) confirms this intuition that datasets with more compressed utility distributions near the acceptance boundary incur higher regret under smooth selection rules. As $\alpha = \beta$ increases, the Beta distribution concentrates around $0.5$, narrowing the range of realized utilities. Both the \ourlottery and softmax exhibit decreasing regret with increasing concentration. Across all shape parameters, the \ourlottery maintains its regret advantage over the softmax rule, consistent with the theoretical gap of $O(\log \napps)$.

Figures~\ref{fig:regret_smoothness_tradeoff_33} and~\ref{fig:regret_smoothness_tradeoff_50} show the tradeoff between regret and smoothness at higher acceptance rates. In both cases, the \ourlottery still outperforms softmax, although the gap is smaller. Intuitively, as the acceptance rate increases, regret normalized by the number of selected candidates $\nselected$ decreases because there are fewer rejected candidates that could have been selected instead, and randomization becomes less costly relative to the optimal top-$\nselected$ set. This is also reflected in our theoretical bounds, which scale with $(1-\frac{\nselected}{\napps})$. 

\begin{figure}
    \centering
    \includegraphics[width=0.8\linewidth]{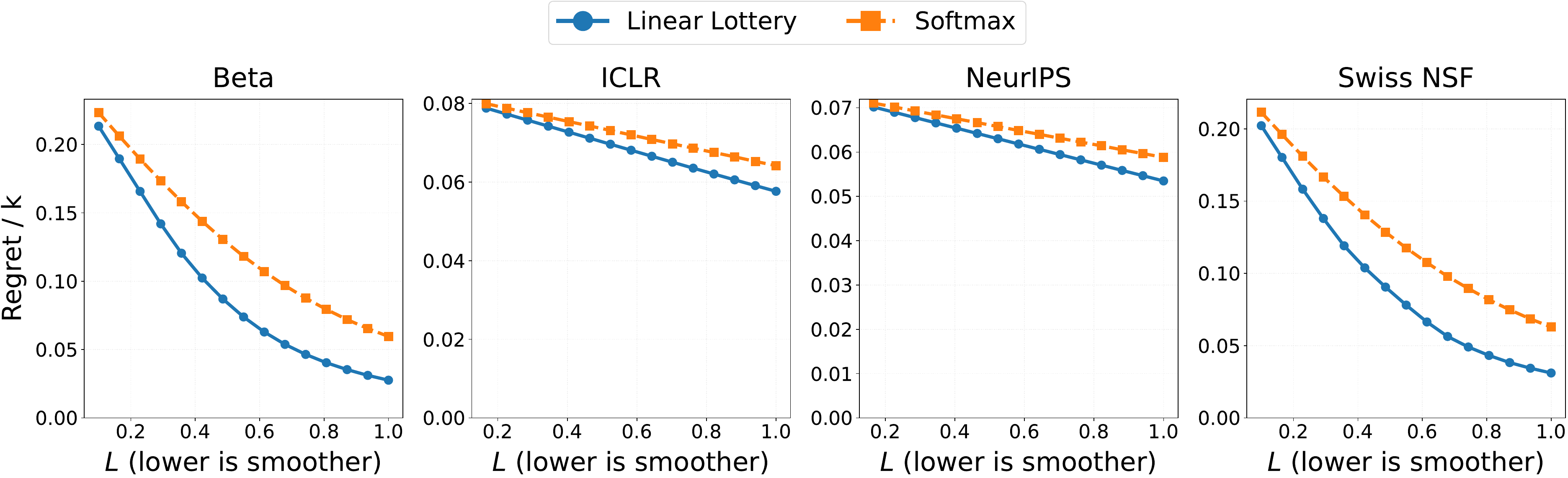}
    \caption{Regret vs.\ smoothness $\smoothness$ at acceptance rate $33\%$.}
    \label{fig:regret_smoothness_tradeoff_33}
\end{figure}

\begin{figure}[H]
    \centering
    \includegraphics[width=0.8\linewidth]{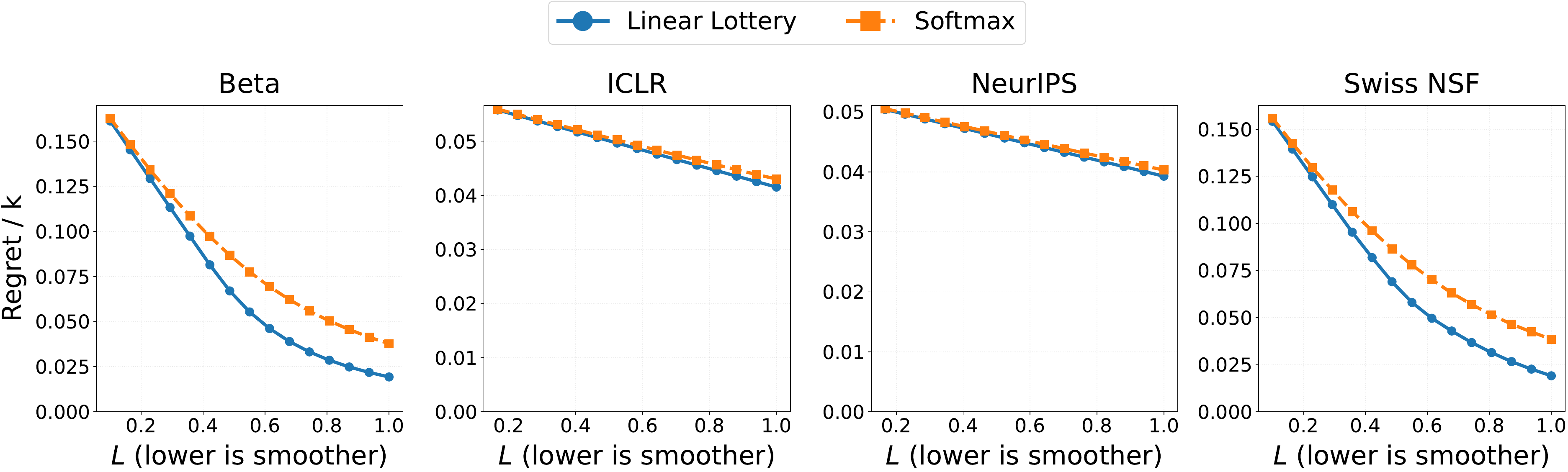}
    \caption{Regret vs.\ smoothness $\smoothness$ at acceptance rate $50\%$.}
    \label{fig:regret_smoothness_tradeoff_50}
\end{figure}

\subsection{Smoothness of Existing Partial Lottery Designs}

We now present additional results on the smoothness of two existing partial lottery mechanisms, MERIT and the Swiss NSF mechanism (Section~\ref{sec:exp_baselines} in the main text). Tables~\ref{tab:baseline_local_sensitivity_existing_10} and~\ref{tab:baseline_local_sensitivity_existing_3350} show that these mechanisms can be highly unstable under small review perturbations. Across datasets and acceptance rates, changing a single review score by one tick can induce large jumps in marginal selection probabilities for both mechanisms. This confirms that the instability of threshold-based partial lotteries is not just a worst-case theoretical concern, but appears empirically on real peer review and grant-funding data. Figure~\ref{fig:matched_regret_50} compares the \ourlottery to MERIT and the Swiss NSF mechanism at acceptance rates of $50\%$. The \ourlottery consistently gives a better regret--smoothness tradeoff, achieving substantially better smoothness at the same regret level.

\input{figures/baseline_local_sensitivity_table_existing.tex}

\input{figures/baseline_local_sensitivity_table_existing_appendix.tex}

\begin{figure}
    \centering
    \includegraphics[width=0.8\linewidth]{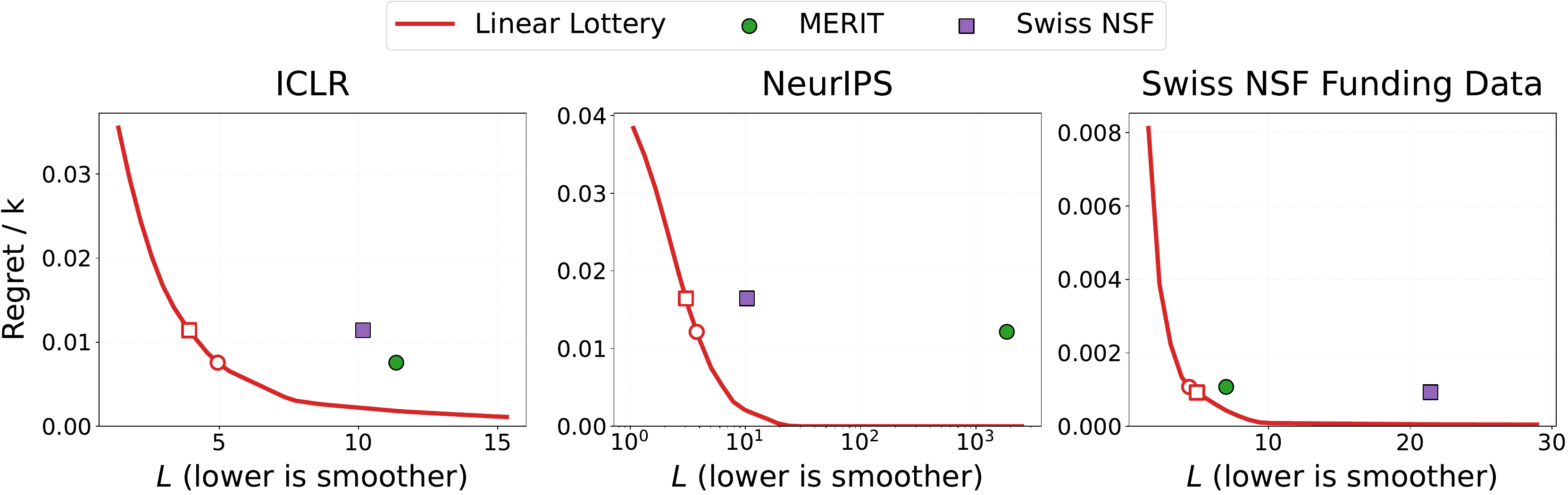}
    \caption{Regret--smoothness tradeoff at acceptance rate 50\%. Points show MERIT and Swiss NSF at their empirical local sensitivity under the worst one-review perturbation. Hollow red markers show worst-case smoothness guarantee of \ourlottery at the same regret level.}
    \label{fig:matched_regret_50}
\end{figure}

\clearpage

\end{document}

%% file: figures/diagram1.tex
\begingroup
\begin{tikzpicture}

\definecolor{beforecol}{RGB}{200,55,55}
\definecolor{aftercol}{RGB}{35,110,210}

\def\drawsmoothbarpair#1#2#3{%
    \pgfmathsetmacro{\xval}{#1}%
    \pgfmathsetmacro{\ybval}{#2}%
    \pgfmathsetmacro{\yaval}{#3}%
    \pgfmathsetmacro{\ymval}{min(\ybval,\yaval)}%
    \pgfmathsetmacro{\bwval}{0.26}%
    \pgfmathsetmacro{\gapval}{0.05}%
    \pgfmathsetmacro{\bLval}{\xval-\gapval-\bwval}%
    \pgfmathsetmacro{\bRval}{\xval-\gapval}%
    \pgfmathsetmacro{\aLval}{\xval+\gapval}%
    \pgfmathsetmacro{\aRval}{\xval+\gapval+\bwval}%
    \edef\tmp{%
        \noexpand\path[draw=beforecol!95!black, line width=0.45pt, fill=white,
            pattern=north east lines, pattern color=beforecol!95!black]
        (axis cs:\bLval,0) rectangle (axis cs:\bRval,\ymval);%
    }\tmp
    \edef\tmp{%
        \noexpand\path[draw=aftercol!95!black, line width=0.45pt, fill=white,
            pattern=north west lines, pattern color=aftercol!95!black]
        (axis cs:\aLval,0) rectangle (axis cs:\aRval,\ymval);%
    }\tmp
    \edef\tmp{%
        \noexpand\path[draw=beforecol!95!black, line width=0.45pt, fill=beforecol]
        (axis cs:\bLval,\ymval) rectangle (axis cs:\bRval,\ybval);%
    }\tmp
    \edef\tmp{%
        \noexpand\path[draw=aftercol!95!black, line width=0.45pt, fill=aftercol]
        (axis cs:\aLval,\ymval) rectangle (axis cs:\aRval,\yaval);%
    }\tmp
}

\begin{groupplot}[
    group style={
        group size=3 by 1,
        horizontal sep=1.95cm,        
    },
    width=0.34\textwidth,
    height=0.40\textwidth,
    xmin=0.45, xmax=8.55, ymin=0,
    xtick={1,...,8},
    xticklabels={1,2,3,4,5,6,7,8},
    tick style={black},
    xticklabel style={font=\scriptsize},
    yticklabel style={font=\scriptsize},
    axis lines*=left,
    axis line style={black},
    xlabel={Proposal index\\[-0.4ex]{\scriptsize(sorted by initial score)}},
    xlabel style={font=\small, align=center, yshift=1mm},
    ylabel style={font=\small, align=center, yshift=-1mm},
    clip=false,
    enlarge x limits=false,
]

\nextgroupplot[
    ylabel={Aggregated review score},
    ymax=1.0, ytick={0,0.2,0.4,0.6,0.8,1.0},
]
\foreach \i/\before/\after in {
    1/0.18/0.18, 2/0.26/0.26, 3/0.34/0.34, 4/0.42/0.42,
    5/0.47/0.53, 6/0.56/0.56, 7/0.68/0.68, 8/0.84/0.84}
    {\drawsmoothbarpair{\i}{\before}{\after}}
\draw[black, thick, rounded corners=1pt]
    (axis cs:4.93,0.43) rectangle (axis cs:5.43,0.57);
\node[black, font=\small, align=center]
    at (axis cs:3.4,0.82) {small increase\\in one review};
\draw[->, thick] (axis cs:3.9,0.73) -- (axis cs:4.95,0.55);

\nextgroupplot[
    ylabel={Marginal selection probability},
    ymax=1.05, ytick={0,0.25,0.50,0.75,1.00},
    ymajorgrids, grid style={dashed, gray!45},
]
\foreach \i/\before/\after in {
    1/0.00/0.00, 2/0.00/0.00, 3/0.00/0.00, 4/0.00/0.00,
    5/0.00/0.333, 6/0.50/0.333, 7/0.50/0.333, 8/1.00/1.00}
    {\drawsmoothbarpair{\i}{\before}{\after}}
\draw[black, thick, rounded corners=1pt]
    (axis cs:4.93,0.02) rectangle (axis cs:5.43,0.38);
\draw[black, thick, rounded corners=1pt]
    (axis cs:5.57,0.29) rectangle (axis cs:6.07,0.54);
\node[black, font=\small, align=center]
    at (axis cs:2.7,0.78) {large\\probability\\jump};
\draw[->, thick] (axis cs:3.4,0.66) -- (axis cs:4.93,0.30);
\draw[->, thick] (axis cs:3.5,0.74) -- (axis cs:5.55,0.46);

\nextgroupplot[
    ylabel={Marginal selection probability},
    ymax=1.05, ytick={0,0.25,0.50,0.75,1.00},
    ymajorgrids, grid style={dashed, gray!45},
]
\foreach \i/\before/\after in {
    1/0.00/0.00, 2/0.00/0.00, 3/0.00/0.00, 4/0.11/0.09,
    5/0.19/0.27, 6/0.34/0.32, 7/0.54/0.52, 8/0.81/0.79}
    {\drawsmoothbarpair{\i}{\before}{\after}}
\draw[black, thick, rounded corners=1pt]
    (axis cs:4.93,0.15) rectangle (axis cs:5.43,0.31);
\draw[black, thick, rounded corners=1pt]
    (axis cs:5.57,0.27) rectangle (axis cs:6.07,0.39);
\node[black, font=\small, align=center]
    at (axis cs:3.0,0.58) {small\\change};
\draw[->, thick] (axis cs:3.7,0.50) -- (axis cs:4.93,0.235);
\draw[->, thick] (axis cs:3.8,0.52) -- (axis cs:5.55,0.34);

\end{groupplot}

\node[anchor=north, font=\small\bfseries, align=center]
    at ($(group c1r1.south)+(0,-1.15cm)$) {(a) Aggregated\\review scores};
\node[anchor=north, font=\small\bfseries, align=center]
    at ($(group c2r1.south)+(0,-1.15cm)$) {(b) Thresholded\\partial lottery};
\node[anchor=north, font=\small\bfseries, align=center]
    at ($(group c3r1.south)+(0,-1.15cm)$) {(c) Clipped Linear Lottery\\(ours)};

\node[
    draw=gray!50,
    rounded corners=2pt,
    fill=white,
    inner xsep=10pt,
    inner ysep=5pt,
    anchor=south,
    font=\small,
    align=center,
] at ($(group c2r1.north)+(0,0.55cm)$) {%
    \tikz[baseline=-0.55ex]{%
        \path[draw=beforecol!95!black, line width=0.45pt, fill=white,
              pattern=north east lines, pattern color=beforecol!95!black]
              (0,-0.10) rectangle (0.36,0.18);%
    }~before \hspace{1.3em}%
    \tikz[baseline=-0.55ex]{%
        \path[draw=aftercol!95!black, line width=0.45pt, fill=white,
              pattern=north west lines, pattern color=aftercol!95!black]
              (0,-0.10) rectangle (0.36,0.18);%
    }~after \hspace{1.3em}%
    \tikz[baseline=-0.55ex]{%
        \path[draw=beforecol!95!black, line width=0.45pt, fill=beforecol]
              (0,-0.10) rectangle (0.18,0.18);
        \path[draw=aftercol!95!black, line width=0.45pt, fill=aftercol]
              (0.18,-0.10) rectangle (0.36,0.18);
    }~change%
};

\end{tikzpicture}
\endgroup

%% file: figures/linear_lottery_example.tex
\begin{figure}[tb]
\centering
\resizebox{0.9\linewidth}{!}{%
\begin{tikzpicture}[
    >=Latex,
    font=\footnotesize,
    box/.style={
        draw,
        rounded corners,
        align=center,
        minimum width=3.0cm,
        minimum height=1.05cm,
        inner sep=4pt
    },
    graybox/.style={box, fill=llgray},
    bluebox/.style={box, fill=llblue},
    orangebox/.style={box, fill=llorange},
    greenbox/.style={box, fill=llgreen}
]

\node[graybox] (u) at (0,0)
{\textbf{Utilities}\\ $\util(\reviewmatrix) = (0.1,\,0.4,\,0.7,\,1.0)$};

\node[bluebox] (s) at (3.8,0)
{\textbf{Scale by $\slope=2$}\\ $(0.2,\,0.8,\,1.4,\,2.0)$};

\node[orangebox] (b) at (7.6,0)
{\textbf{Shift by $\intercept=-0.6$}\\ $(-0.4,\,0.2,\,0.8,\,1.4)$};

\node[greenbox] (p) at (11.4,0)
{\textbf{Clip to $[0,1]$}\\ $\selectionprob(\reviewmatrix) = (0,\,0.2,\,0.8,\,1)$};

\draw[->, thick] (u.east) -- (s.west);
\draw[->, thick] (s.east) -- (b.west);
\draw[->, thick] (b.east) -- (p.west);

\end{tikzpicture}%
}
\caption{
A concrete example of the Clipped Linear Lottery with $\napps=4$ candidates and budget $\nselected=2$, following Algorithm~\ref{alg:linear_lottery}. Utilities are scaled by $\slope=2$, then shifted by an intercept $\intercept=-0.6$ chosen so that the clipped probabilities sum to the budget. Clipping the shifted values to $[0,1]$ gives marginals $\selectionprob(\reviewmatrix)=(0,0.2,0.8,1)$, which sum to $\nselected=2$. Thus, the lowest-utility candidate is auto-rejected, the highest is auto-accepted, and the two middle candidates form the active lottery pool.
}
\label{fig:linear_lottery_example}
\end{figure}

%% file: figures/baseline_local_sensitivity_table_existing.tex
\begin{table}
\centering
\small
\setlength{\tabcolsep}{4pt}
\begin{tabular}{l|rr|rr|rr}
\toprule
 & \multicolumn{2}{c|}{MERIT} & \multicolumn{2}{c|}{Swiss NSF}\\
Dataset & Max $\Delta p$ & Smoothness & Max $\Delta p$ & Smoothness \\
\midrule
ICLR      & 0.314 & 5.7 & 0.532 & 9.6  \\
NeurIPS   & 0.321 & 5.8 & 0.633 & 11.4  \\
Swiss NSF & 0.471 & 4.7 & 0.923 & 18.7  \\
Beta      & 0.500 & 9.0 & 0.667 & 24.0  \\
\bottomrule
\end{tabular}
\caption{Empirical evaluation of smoothness of existing partial lotteries under a single-review one-tick perturbation ($\nselected=10\%$). Max $\Delta p$ is the maximum coordinate change and Smoothness is local sensitivity ($\Delta p_{L1}$/input perturbation magnitude). Existing lotteries are highly non-smooth, as a one-point change in a single review induces large changes in selection probabilities across all datasets.}
\label{tab:baseline_local_sensitivity_existing_10}
\end{table}

%% file: figures/baseline_local_sensitivity_table_existing_appendix.tex
\begin{table}
\centering
\small
\setlength{\tabcolsep}{4pt}
\begin{tabular}{llrrrrrr}
\toprule
 & & \multicolumn{2}{c}{MERIT} & \multicolumn{2}{c}{Swiss NSF}  \\
Dataset & $k$ & Max $\Delta p$ & Smoothness & Max $\Delta p$ & Smoothness \\
\midrule
ICLR & 33\% & 0.001 & 5.4 & 0.634 & 11.4  \\
ICLR & 50\% & 0.504 & 11.4 & 0.565 & 10.2  \\
\midrule
NeurIPS & 33\% & 0.846 & 187.1 & 0.585 & 10.5  \\
NeurIPS & 50\% & 0.536 & 1852.9 & 0.572 & 10.3  \\
\midrule
Swiss NSF & 33\% & 0.571 & 10.0 & 0.524 & 5.2  \\
Swiss NSF & 50\% & 0.564 & 7.0 & 0.429 & 21.4  \\
\midrule
Beta & 33\% & 0.500 & 18.0 & 0.625 & 17.0  \\
Beta & 50\% & 0.367 & 7.0 & 0.556 & 28.0  \\
\bottomrule
\end{tabular}
\caption{Existing baseline partial lotteries, $k \in \{33\%, 50\%\}$. Local sensitivity under a single-review one-tick perturbation. Max $\Delta p$ is the maximum coordinate change and Smoothness is local sensitivity ($\Delta p_{L1}$/input perturbation magnitude).}
\label{tab:baseline_local_sensitivity_existing_3350}
\end{table}